\documentclass{article} %
\usepackage{iclr2026_conference,times}

\usepackage{amsmath,amsfonts,bm}

\def\eqref#1{equation~\ref{#1}}

\def\1{\bm{1}}

\DeclareMathAlphabet{\mathsfit}{\encodingdefault}{\sfdefault}{m}{sl}
\SetMathAlphabet{\mathsfit}{bold}{\encodingdefault}{\sfdefault}{bx}{n}

\usepackage{hyperref}
\usepackage{url}
\usepackage{graphicx}
\usepackage{amsmath}
\usepackage{amssymb}
\usepackage{amsthm}
\usepackage{booktabs}
\usepackage{arydshln}
\usepackage{multirow}
\usepackage{array}
\usepackage{caption}
\usepackage{subcaption}
\usepackage{wrapfig}

\newtheorem{proposition}{Proposition}

\title{Multi-Resolution Flow Matching: Training-Free Diffusion Acceleration via Staged Sampling}

\author{%
Xingyu Zheng$^{1,2}$ \quad
Xianglong Liu$^{1,2}$\thanks{Corresponding author: Xianglong Liu \textless\url{xlliu@buaa.edu.cn}\textgreater.} \quad
Yifu Ding$^{1,2,3}$ \quad
Weilun Feng$^{4,5}$ \quad
Junqing Lin$^{6}$ \\
{\bf Jinyang Guo$^{1,7}$ \quad
Haotong Qin$^{8}$} \\
\normalfont
$^{1}$State Key Laboratory of Complex \& Critical Software Environment, Beihang University \\
$^{2}$School of Computer Science and Engineering, Beihang University \\
$^{3}$Nanyang Technological University \\
$^{4}$State Key Laboratory of AI Safety, Institute of Computing Technology, Chinese Academy of Sciences \\
$^{5}$University of Chinese Academy of Sciences \quad
$^{6}$University of Science and Technology of China \\
$^{7}$School of Artificial Intelligence, Beihang University \quad
$^{8}$ETH Z\"urich
}

\begin{document}

\maketitle

\begin{figure}[h]
\begin{center}
\includegraphics[width=\linewidth]{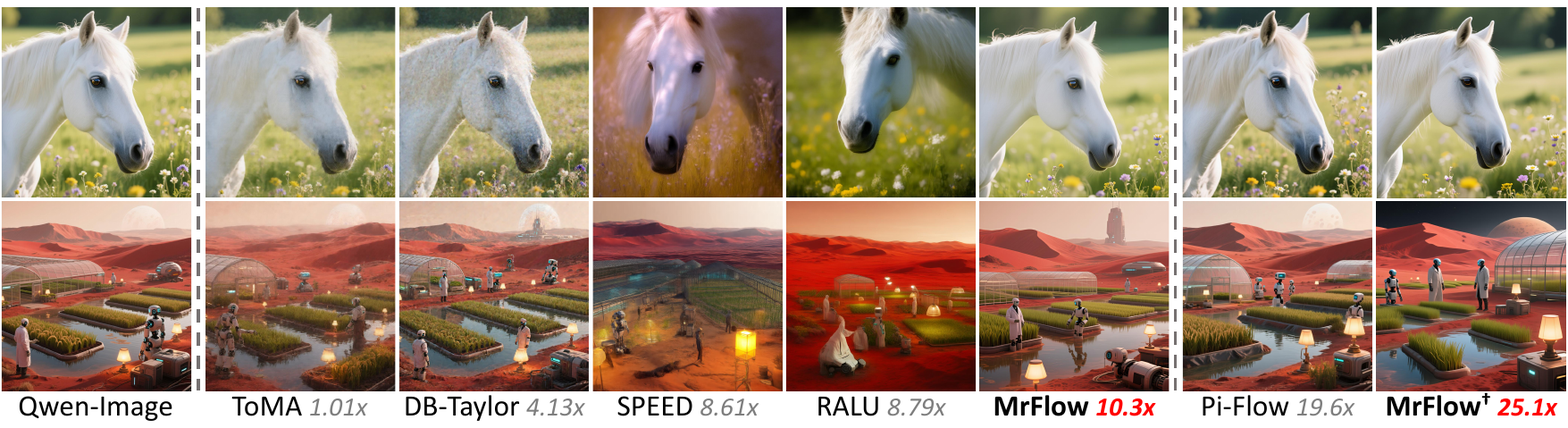}
\end{center}
\caption{Qualitative comparison between MrFlow and various methods on Qwen-Image. The dashed lines separate the pretrained model, training-free strategies, and strategies that rely on or exploit timestep distillation.}
\label{fig:showcase-compare}
\end{figure}

\begin{abstract}
  Hardware-agnostic strategies for accelerating text-to-image diffusion, such as timestep distillation and feature caching, can reduce inference time without custom kernels or system-level optimization. Among them, multi-resolution generation strategies have recently received broad attention, attaining more than $5\times$ speedup without any training. However, the design of performing upsampling in the latent space, together with the selective modification of partial regions, causes these methods to exhibit noticeable blurring or artifacts. To this end, we propose \textbf{MrFlow}, a training-free multi-resolution acceleration strategy for pretrained flow-matching models built upon a staged low-to-high-resolution pipeline. MrFlow first rapidly generates the main structure at low resolution, then performs super-resolution in the pixel space using a lightweight pretrained GAN-based model, subsequently injects low-strength noise to enable high-frequency resampling, and finally refines the details at high resolution. Quantitative and qualitative results on FLUX.1-dev and Qwen-Image show that MrFlow exploits the quadratic token reduction and reduced step requirement of low-resolution sampling to achieve $10\times$ end-to-end acceleration while keeping OneIG within a $1\%$ gap relative to that before acceleration, significantly surpassing other training-free acceleration strategies, and requiring no training or runtime dynamic identification whatsoever. MrFlow can further be directly combined orthogonally with pre-trained timestep distillation strategies, achieving even higher generation acceleration of up to $25\times$.
\end{abstract}

\section{Introduction}

Diffusion built upon Transformers as the main architecture~\citep{Peebles2023DiT} and flow matching~\citep{Lipman2023FlowMatching, Liu2023RectifiedFlow} as the paradigm has become the mainstream in image generation tasks. However, accompanying the scaling of generation quality is the scaling of the model's computational cost. Taking Qwen-Image-20B~\citep{QwenImage2025} as an example, it requires up to $47$s on diffusion sampling when performing a $1024\times1024$ text-to-image task on an Nvidia A100. As a result, numerous works seek to accelerate the sampling-based generation of diffusion. Apart from strategies that rely on hardware or systems to implement acceleration, such as quantization~\citep{Li2023QDiffusion} and efficient attention~\citep{Dao2022FlashAttention}, researchers have found that on diffusion one can reduce the number of computed tokens to achieve direct acceleration. Timestep distillation~\citep{Salimans2022ProgressiveDistillation, Luo2023LCM} is one of the most effective acceleration strategies, capable of reducing the Number of Function Evaluations (NFEs) from $50$-$100$ down to $1$-$4$. Feature caching~\citep{TeaCache2024} aims to exploit the similarity between adjacent timesteps, reusing historical features to skip the computation of partial layers or structures. However, these methods either depend heavily on training, or can only attain limited speedup of less than $4\times$. Some works that compose multiple resolution levels~\citep{LSSGen, RALU} exploit the spatial characteristics of the image modality and successfully achieve more than $5\times$ speedup in a training-free manner, but most still rely on runtime dynamic identification of image regions, and the operation of performing upsampling in the latent space leads to unavoidable artifacts or blurring.

In this paper, we design \textbf{MrFlow}, a multi-resolution sampling strategy oriented to modern diffusion. After generating the main image structure starting from random noise at low resolution (LR), it enlarges the image in the original pixel space via a super-resolution (SR) network, then injects low-strength noise into the VAE-reencoded latent to enable high-frequency resampling, and finally performs fast sampling in the high-resolution (HR) latent space to refine image details. MrFlow exploits the structure preservation and detail discrepancy of spatial information that images naturally possess at different resolutions, achieving efficient coarse-to-fine generation. This method: (1) fully exploits the fast structure-generation capability of the pretrained diffusion at low resolution, where not only the per-step execution time can be accelerated by about $4\times$, but also fewer timesteps are required; (2) adopts a pretrained lightweight GAN-based super-resolution network in the pixel space, preserving the low-resolution structural information while adding high-frequency signals; (3) the low-strength noise weakens potentially erroneous SR-imposed high-frequency details, allowing the high-resolution flow prior to resample and correct them; (4) refines at high resolution the image details that super-resolution can hardly fully realize, and the naturally straighter sampling trajectory near the clean-image end enables the resampling to complete rapidly as well. The overall framework of MrFlow is illustrated in Figure~\ref{fig:framework}.

\begin{figure}[t]
\begin{center}
\includegraphics[width=\linewidth]{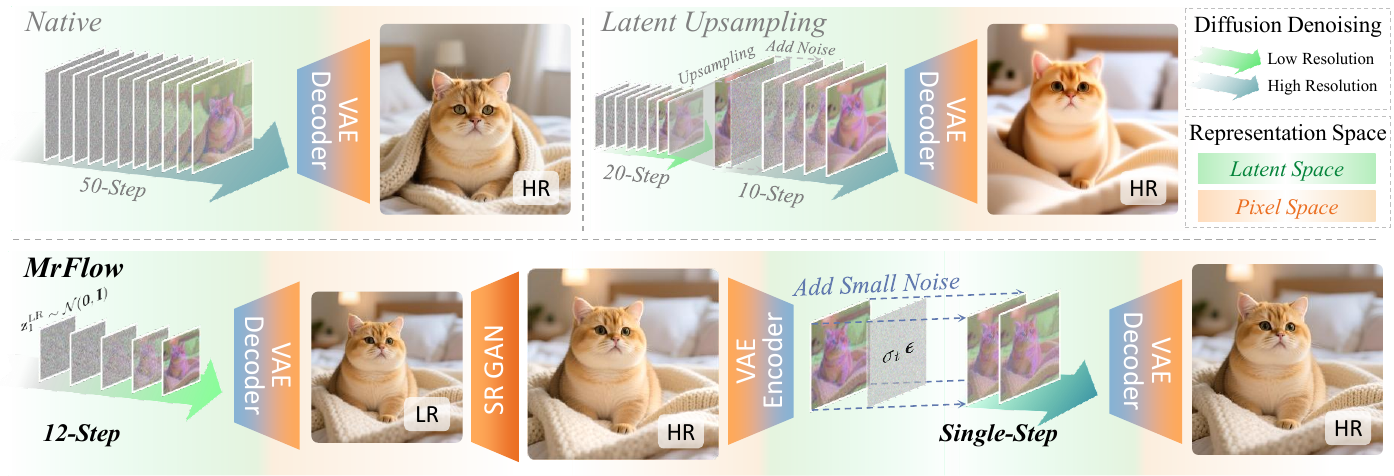}
\end{center}
\caption{The framework of MrFlow. The compared strategies include the native inference scheme and methods that perform upsampling in the latent space such as LSSGen, RALU and SPEED.}
\label{fig:framework}
\end{figure}

Extensive experiments on evaluation metrics such as OneIG-Bench and on real sample generation show that MrFlow can not only achieve generation results within a $1\%$ gap relative to the native trajectory-flow generation at a speedup of more than $10\times$, but also require no training or runtime dynamic statistics and identification, significantly surpassing various other training-free acceleration strategies in both generation quality and speedup. MrFlow can further be directly combined orthogonally with pre-trained timestep distillation strategies, achieving even higher generation acceleration of up to more than $25\times$.

In summary, our contributions are as follows:

\begin{itemize}
\item A multi-resolution acceleration pipeline: the overall procedure comprises low-resolution latent-space sampling, VAE decoding, pixel-space super-resolution, VAE encoding, noise injection, high-resolution latent-space sampling, and VAE decoding. This paradigm aims to harness the efficient image-structure generation capability of low resolution and the detail-semantics processing capability unique to high resolution, and makes clear that pixel-space super-resolution preserves structure, while latent-space noise injection enables high-frequency resampling before final refinement.
\item Principle innovation and analysis of each stage: we not only clarify the multi-resolution pipeline from an acceleration perspective, but also analyze in detail the advantages of the designed scheme in the acceleration of each stage. At the low-resolution stage, we point out that not only the per-step generation efficiency naturally enjoys a quadratic-level speedup, but also its efficient semantic generation capability makes the timesteps naturally easier to set fewer; at the super-resolution stage, we explicitly point out that one should employ a pretrained GAN-based super-resolution network in the pixel space to maximally preserve semantic information; at the noise-injection stage, we point out that the purpose of the noise injection is to reduce the influence of potentially erroneous SR high-frequency details, so that the high-resolution flow prior can resample them with only low strength; at the high-resolution stage, we point out that one can exploit the property that the flow trajectory is straighter near the image side, completing the sampling with very few timesteps.
\item Combining operational flexibility and simplicity, high speedup, good generation quality, and strong generalization. Without any training, specific hardware, complex code adaptation, or additional runtime statistics, it achieves more than $10\times$ speedup while preserving generation quality. MrFlow can freely set different configurations to flexibly realize the trade-off between speedup and generation quality, and can not only outperform various training-free acceleration strategies but also be directly combined orthogonally with existing timestep distillation models to achieve even higher acceleration of more than $25\times$.
\end{itemize}

\section{Related Work}

\textbf{Flow matching.} Flow matching~\citep{Lipman2023FlowMatching} and rectified flow~\citep{Liu2023RectifiedFlow} recast diffusion training as learning a velocity field along linear interpolants between data and noise. For a data sample $x_0$ and noise $\epsilon\sim\mathcal{N}(0, I)$, the forward process is:
\begin{equation}
x_t = (1-\sigma_t)x_0 + \sigma_t\epsilon,
\end{equation}
and the model learns:
\begin{equation}
v_\theta(x_t, t)\approx\mathbb{E}[\epsilon - x_0 \mid x_t],
\end{equation}
sampling integrates $\dot x_t = v_\theta$ from $t{=}1$ to $t{=}0$. Diffusion models since SD3~\citep{SD3}, including typical advanced models such as FLUX~\citep{FLUX2024} and Qwen-Image~\citep{QwenImage2025}, almost all adopt this formulation.

\textbf{Acceleration of diffusion generation.} Quantization~\citep{Li2023QDiffusion} and efficient attention~\citep{Dao2022FlashAttention} are general compression and acceleration strategies that rely on hardware or systems to implement, and their acceleration gains on diffusion are relatively limited. The diffusion-specific acceleration methods are mainly timestep reduction~\citep{Salimans2022ProgressiveDistillation, Song2023ConsistencyModels}, feature caching~\citep{TeaCache2024, CacheDit}, and token pruning~\citep{ToMA}, all of which can improve sampling efficiency without hardware dependence. Accompanying a speedup of $10\times$ and above, strategies represented by timestep distillation~\citep{Luo2023LCM, SenseFlow, PiFlow} usually rely on fine-tuning training of non-negligible cost. Feature caching can typically attain around $4\times$ inference speedup without training. Existing works on token compression can usually only attain less than $1.5\times$ inference speedup.

\textbf{Multi-resolution generation.} Leveraging the characteristic information distribution that images naturally possess at different resolutions, many works have explored multi-stage generation from low resolution to high resolution~\citep{SR3, CascadedDiffusion, Imagen}. One class targets extending the attainable resolution~\citep{DemoFusion, MegaFusion}. The goal of this class of work is to generate larger images beyond the native resolution, and they do not concern themselves with acceleration gains at the native resolution. The second class treats multi-resolution as an acceleration means~\citep{LSSGen, RALU, BottleneckSampling, SPD}; they aim to place the generation focus on the low resolution and only infer a few steps at high resolution to achieve acceleration, and they all complete the upsampling within the latent or frequency domain.

\textbf{Super-resolution and image repainting.} Image super-resolution in the pixel space has long been divided into three classes: regression-based methods represented by SwinIR~\citep{SwinIR}, generative-adversarial-based methods represented by Real-ESRGAN~\citep{RealESRGAN}, and diffusion-based methods represented by OSEDiff~\citep{OSEDiff}. Image repainting is another line of work related to ours. Methods such as SDEdit~\citep{SDEdit} add noise on an existing image and then perform reverse denoising, in order to accomplish image editing while preserving semantics; this class of methods treats noising-denoising as the basic tool of image editing.

\section{Method}
\label{sec:method}

We propose MrFlow, a multi-resolution generation strategy designed from an acceleration perspective. Its tenet is to first rapidly generate the overall image structure at the low-resolution stage, then map back to the pixel space for super-resolution, subsequently add low-strength noise for high-frequency resampling, and finally make a small amount of detail refinement on the image at the high-resolution stage. Specifically, MrFlow comprises the following procedures: low-resolution latent-space sampling, VAE decoding, pixel-space super-resolution, VAE encoding, noise injection, high-resolution latent-space sampling, and VAE decoding. As a typical configuration, MrFlow requires only $12$ sampling steps at the low-resolution stage, the noise strength injected after super-resolution is usually only $0.1$, and then one denoising step at the high-resolution stage suffices. The overall design of the pipeline is built upon one pervasive observation: the low-resolution stage determines the global structure and semantic layout of the image, while the high-resolution stage only needs to refine the details of the super-resolved image from the aforementioned stage. This chapter elaborates the design of each stage of MrFlow; the quantitative analysis and ablation experiments are placed uniformly in Section~\ref{sec:main_results} and Appendix~\ref{app:stage_analysis}.

\subsection{Low-Resolution Structure Generation}
\label{sec:lr_generation}

\textbf{Formulation.} This stage first samples initial noise in the low-resolution latent space:
\begin{equation}
\mathbf{z}_1^{\mathrm{LR}} \sim \mathcal{N}(\mathbf{0},\mathbf{I}), \qquad \mathbf{z}_1^{\mathrm{LR}} \in \mathbb{R}^{C\times H_L\times W_L},
\end{equation}
where $C$ is the number of latent channels, and $H_L\times W_L$ is determined by the target output size through the downsampling factor of the pretrained VAE.

It then performs ODE sampling in the flow matching form to obtain the clean latent:
\begin{equation}
\mathbf{z}_0^{\mathrm{LR}} \;=\; \Phi_{\mathbf{v}_\theta,\,\mathbf{c}}^{K_L}\!\big(\mathbf{z}_1^{\mathrm{LR}}\big),
\end{equation}
where $\mathbf{v}_\theta$ is the pretrained flow matching velocity network, $\mathbf{c}$ is the text-condition embedding, and $\Phi_{\mathbf{v}_\theta,\mathbf{c}}^{K}$ denotes the rectified flow ODE flow map obtained by $K$-step Euler discretization, with the iteration rule $\mathbf{z}_{t-h}=\mathbf{z}_t-h\,\mathbf{v}_\theta(\mathbf{z}_t,t,\mathbf{c})$. This work typically takes $K_L=12$.

Finally, the clean latent is mapped back to the pixel space through the pretrained VAE decoder $\mathcal{D}$:
\begin{equation}
\mathbf{x}_{\mathrm{LR}} \;=\; \mathcal{D}\!\big(\mathbf{z}_0^{\mathrm{LR}}\big), \qquad \mathbf{x}_{\mathrm{LR}}\in\mathbb{R}^{3\times H_L^{\mathrm{px}}\times W_L^{\mathrm{px}}},
\end{equation}
where $H_L^{\mathrm{px}}\times W_L^{\mathrm{px}}$ is the low-resolution pixel-space size.

The low-resolution image generated at this stage already almost possesses the complete global structure and the semantic features that match the prompt. However, at the same display size, it usually appears blurrier compared with conventional images generated directly at high resolution, and local fine-grained semantics such as text and symbols are difficult to generate stably.

\textbf{Analysis.} The acceleration of the low-resolution stage comes from two mutually independent sources. The first is that the overall cost of single-step inference is measured to scale almost linearly with the number of image tokens, yielding about $4\times$ comprehensive speedup when each side is shrunk by $2\times$. The second is that the number of sampling steps required at low resolution is itself fewer: under an equal step-count budget, a two-stage schedule dominated by low resolution needs only $10$ low-resolution steps plus $1$ high-resolution step to attain even better visual quality than direct generation with $11$ high-resolution steps, with latency reduced by more than half relative to the latter while CLIP consistency is instead higher. We attribute this few-step convergence phenomenon to two complementary causes: the higher text-to-image semantic utilization at low resolution, and the shorter ODE path corresponding to the low-frequency skeleton; the quantitative decomposition and corresponding evidence are given in Appendix~\ref{app:c1}. Appendix~\ref{app:c5} further characterizes the role of this stage in establishing the global structure of the generated image.

\subsection{Pixel-Space Super-Resolution}
\label{sec:pixel_sr}

\begin{figure}[t]
\begin{center}
\includegraphics[width=\linewidth]{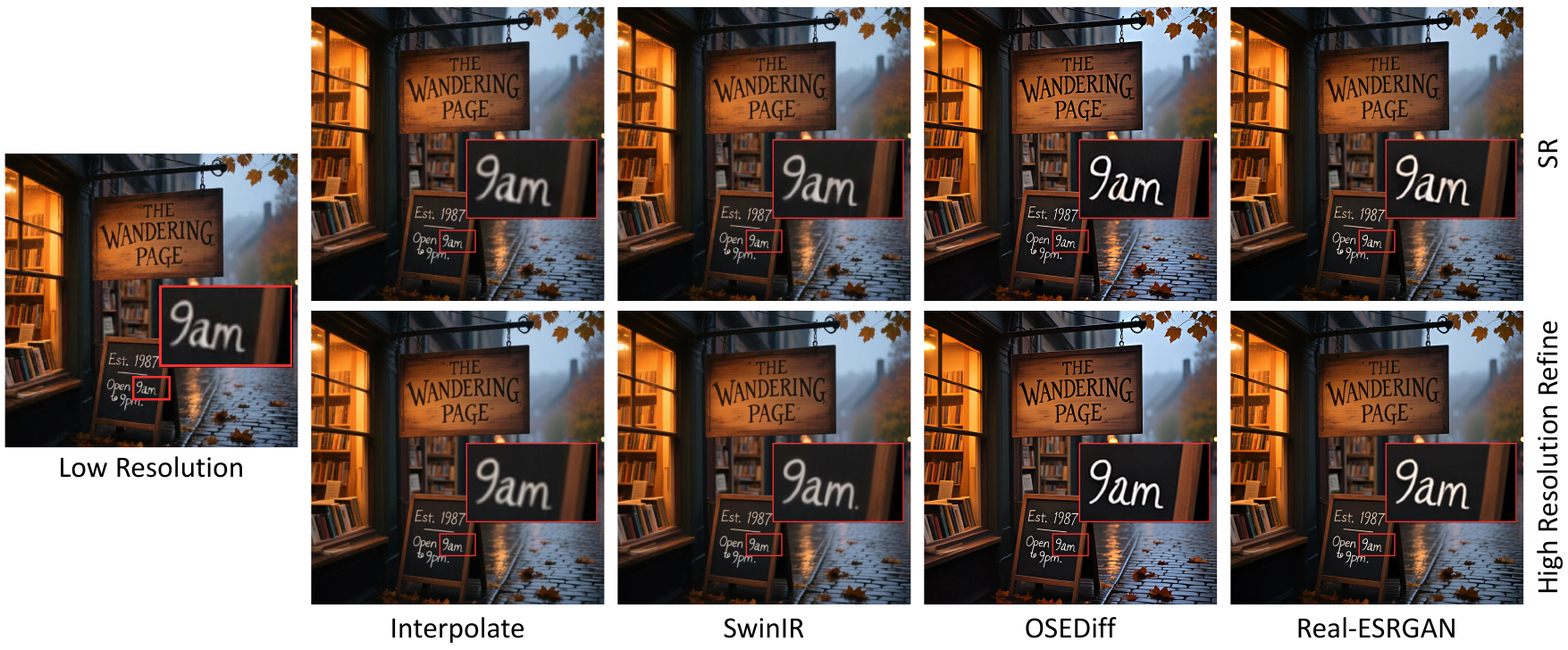}
\end{center}
\caption{Example comparison of MrFlow adopting different super-resolution strategies on Qwen-Image, where Low Resolution denotes the image after low-resolution generation followed by VAE decoding to the pixel space, with a resolution of $512\times512$. The rows of SR and High Resolution Refine are respectively the super-resolved image and the final image after high-resolution refinement, with a resolution of $1024\times1024$.}
\label{fig:sr_compare}
\end{figure}

\textbf{Formulation.} For the image obtained after low-resolution sampling and VAE decoding, we apply super-resolution in the pixel space:
\begin{equation}
\mathbf{x}_{\mathrm{SR}} \;\gets\; U\!\big(\mathbf{x}_{\mathrm{LR}}\big),\qquad \mathbf{x}_{\mathrm{SR}}\in\mathbb{R}^{3\times H_H^{\mathrm{px}}\times W_H^{\mathrm{px}}},
\end{equation}
where $U$ is the pretrained super-resolution network, upsampling the pixel-space image from the low-resolution size $H_L^{\mathrm{px}}\times W_L^{\mathrm{px}}$ to the high-resolution size $H_H^{\mathrm{px}}\times W_H^{\mathrm{px}}$, with $\times 2$ upsampling typically chosen in this work. Specifically, we directly adopt the pretrained Real-ESRGAN~\citep{RealESRGAN} as $U$, rather than simple interpolation or L2-supervised super-resolution models such as SwinIR~\citep{SwinIR}.

While enlarging the resolution, the super-resolution network both completely preserves the overall image structure generated at the low-resolution stage and supplements part of the high-frequency information. However, since the pretrained super-resolution network does not carry the semantic information in the prompt and originates only from pretrained knowledge of distribution matching between different resolutions in general scenes, the high-resolution image at this stage may still exhibit local structural confusion such as artifacts and character-stroke shifts.

\textbf{Analysis.} This stage involves two key design choices: performing upsampling in the pixel space, and adopting a GAN-based pretrained super-resolution model. First, the pixel space is a more robust upsampling domain: the edges, textures, and power-spectrum priors of natural images are all built upon pixels, and the pretrained knowledge of prior super-resolution literature can only be reused in the pixel domain; meanwhile, the VAE encoder naturally attenuates the high-frequency components in the pixel domain that exceed the training distribution, performing a lightweight regularization on the super-resolution output; in addition, the extra VAE encoding-decoding cost is very small relative to the inference cost of diffusion. Second, the downstream refine is effective only when the SR output keeps the coarse layout and leaves mainly local high-frequency residuals. GAN-based super-resolution better matches this regime by producing sharp, natural-image-like details, whereas interpolation or regression-based SR preserves the layout but leaves persistent high-frequency attenuation and diffuse blur. Appendix~\ref{app:c2} gives the corresponding empirical comparison.

\subsection{Low-Strength Noise for High-Frequency Resampling}
\label{sec:noise_resampling}

\textbf{Formulation.} After the pixel-space super-resolution, re-encoding through the pretrained VAE encoder $\mathcal{E}$ yields the high-resolution latent:
\begin{equation}
\mathbf{z}_0^{\mathrm{SR}} \;=\; \mathcal{E}\!\big(\mathbf{x}_{\mathrm{SR}}\big),\qquad \mathbf{z}_0^{\mathrm{SR}} \in \mathbb{R}^{C\times H_H\times W_H}.
\end{equation}
Unlike the clean latent obtained by direct high-resolution sampling from noise, $\mathbf{z}_0^{\mathrm{SR}}$ may contain artifacts and character-stroke shifts introduced by GAN super-resolution. These errors are mostly localized in high-frequency directions, while the low-frequency structure is already determined by the low-resolution stage and should be preserved.

To this end, before entering the high-resolution refine, we inject low-strength noise in the flow matching form to obtain the noised SR latent:
\begin{equation}
\mathbf{z}_t^{\mathrm{HR}} \;=\; (1-\sigma_t)\,\mathbf{z}_0^{\mathrm{SR}} \,+\, \sigma_t\,\boldsymbol{\epsilon},\qquad \boldsymbol{\epsilon}\sim\mathcal{N}(\mathbf{0},\mathbf{I}),
\end{equation}
where $\sigma_t\in(0,1)$ is the flow noise level matched to the refinement timestep, and this work typically takes $\sigma_t\in[0.1,0.15]$. This step keeps the low-frequency structure at high SNR, while lowering the SNR of high-frequency directions so that the following denoising step can resample them according to the high-resolution flow prior.

\textbf{Analysis.} The global structure and low-frequency content of $\mathbf{z}_0^{\mathrm{SR}}$ are largely inherited from the low-resolution stage and should be preserved, while the remaining uncertainty mainly lies in the high-frequency details introduced by GAN super-resolution. Therefore, the injected noise only needs to reduce the high-frequency SNR enough for denoising to resample that band according to the data prior and correct the potential errors of the SR. Let $\lambda_{\mathrm{hf}}$ be the power, i.e., the variance, of the clean latent's high-frequency band; the signal-to-noise-ratio condition yields the lower bound of the noise level:
\begin{equation}
\sigma_t \;\ge\; \sigma_t^\star=\frac{\sqrt{\lambda_{\mathrm{hf}}}}{1+\sqrt{\lambda_{\mathrm{hf}}}}.
\end{equation}

This lower bound is computed from the measurable high-frequency power of the clean data, rather than from an unknown norm of the SR error. It therefore characterizes the frequency regime where low-strength noising is effective: once the SR output is locally plausible and its residual errors are mainly high-frequency, the small high-frequency power of the clean latent makes $\sigma_t\in[0.1,0.15]$ sufficient for resampling. If the SR output instead contains diffuse blur or low-frequency deviations, this premise no longer holds and substantially stronger noising would be needed, which would wash away useful SR details and require more steps. The complete posterior-mean derivation, the derivation of the above inequality, the supplementary distribution-level view, and the power-spectrum measurement are given in Appendix~\ref{app:c3}.

\subsection{High-Resolution Detail Refinement}
\label{sec:hr_refinement}

\textbf{Formulation.} After completing the low-strength noising on the high-resolution latent, we take the noised high-resolution latent $\mathbf{z}_t^{\mathrm{HR}}$ as the initial value and apply the flow map on the high resolution along the same velocity-estimation network:
\begin{equation}
\mathbf{z}_0^{\mathrm{HR}} \;=\; \Phi_{\mathbf{v}_\theta,\,\mathbf{c}}^{K_H}\!\big(\mathbf{z}_t^{\mathrm{HR}}\big),
\end{equation}
where $\mathbf{c}$ is the text-condition embedding completely identical to that of the low-resolution stage, and $K_H$ is the number of Euler steps at the high-resolution stage. In practice, we start the high-resolution flow from the timestep whose noise level matches the preceding noising stage, and use $K_H=1$ by default; additional high-resolution steps bring only marginal changes under this low-strength regime. Finally, the clean latent is sent back to the pixel space through the VAE decoder $\mathcal{D}$ to obtain the final output:
\begin{equation}
\mathbf{x}_{\mathrm{HR}} \;=\; \mathcal{D}\!\big(\mathbf{z}_0^{\mathrm{HR}}\big),\qquad \mathbf{x}_{\mathrm{HR}}\in\mathbb{R}^{3\times H_H^{\mathrm{px}}\times W_H^{\mathrm{px}}}.
\end{equation}
This stage performs the final one-step detail refinement on the local artifacts introduced by the SR output, and the entire pipeline at this point outputs a high-quality image at the target resolution.

\begin{wrapfigure}{r}{0.6\linewidth}
\begin{center}
\vspace{-0.2in}
\includegraphics[width=\linewidth]{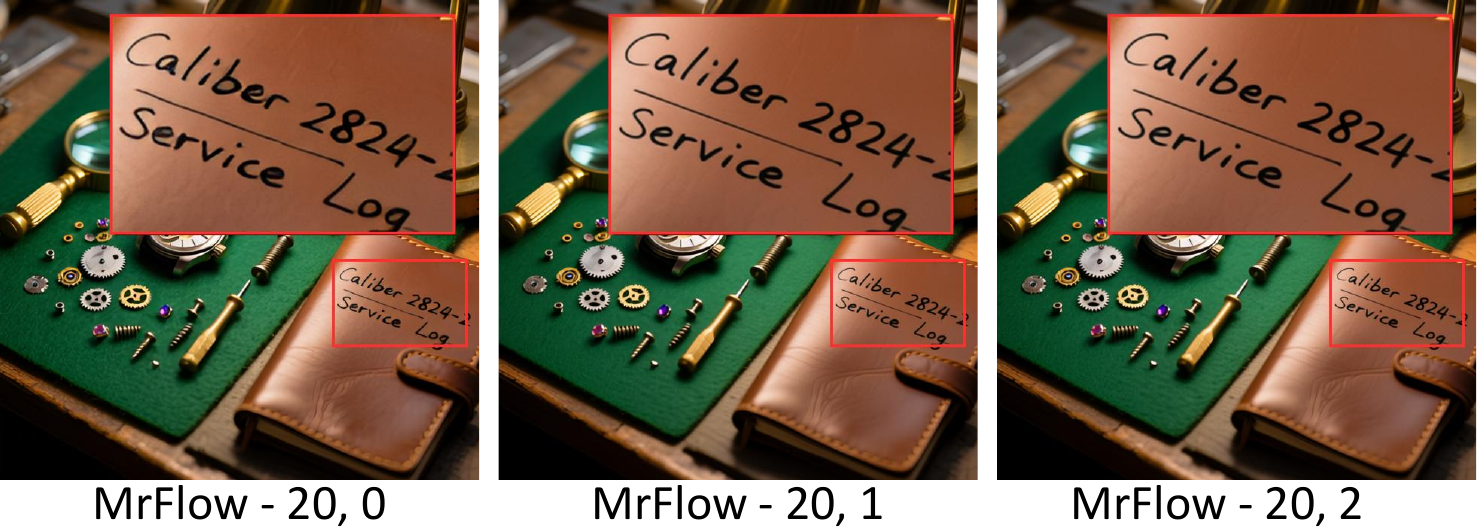}
\end{center}
\caption{Images generated by MrFlow under different high-resolution step configurations on Qwen-Image.}
\label{fig:mrflow-compare-qwen}
\end{wrapfigure}

\textbf{Analysis.} The reason the high-resolution refine stage can be completed with a single step is that the noised SR latent is already close to the clean endpoint. Under low-strength noising, the velocity-field magnitude is flatter and the adjacent-step difference is smaller, so the single-step Euler discretization error is correspondingly lower; in measurements, single-step denoising at $s=0.1$ already attains a CLIP consistency of $0.9974$ relative to the $8$-step reference, only $0.0025$ away from the $0.9999$ of $5$-step denoising. This phenomenon is consistent with the observation in SD3~\citep{SD3} timestep shift that the schedule segments near the clean endpoint have low cost, and can be regarded as a special case of that principle extrapolated to SR followed by low-strength noise injection. The velocity-field characterization of this single-step behavior and the ablation of schedule shape are given in Appendix~\ref{app:c4}; Appendix~\ref{app:c5} further characterizes this stage as a detail-refinement step that preserves the established structure.

\section{Empirical Results}

We mainly conduct experiments on text-to-image tasks. Section~\ref{sec:impl} reviews the experimental setup, Section~\ref{sec:main_results} mainly presents extensive comparisons with various acceleration strategies on quantitative metrics, and Section~\ref{sec:ablation} is the overall ablation experiment of MrFlow. Further analysis of each stage of MrFlow is detailed in Appendix~\ref{app:stage_analysis}, and more visualization examples are detailed in Appendix~\ref{app:examples}.

\subsection{Implementation Details}
\label{sec:impl}

Our main acceleration experiments are conducted on FLUX.1-dev~\citep{FLUX2024} and Qwen-Image-20B~\citep{QwenImage2025}, with quantitative evaluation uniformly set at $1024\times1024$ resolution, and the evaluation metrics adopt three classes: Geneval~\citep{Geneval}, DPG-Bench~\citep{DPGBench}, and OneIG-Bench~\citep{OneIGBench}. The selected comparison methods include typical training-free acceleration strategies of various classes, comprising the feature-cache-based Teacache~\citep{TeaCache2024} and DB-Taylor (the comprehensive version fusing DBCache and TaylorSeer~\citep{TaylorSeer} in Cache-Dit~\citep{CacheDit}, which we abbreviate as DB-Taylor), the token-fusion-based ToMA~\citep{ToMA}, and LSSGen~\citep{LSSGen}, RALU~\citep{RALU}, and SPEED~\citep{SPD} which are also based on multi-resolution. For extremely high speedup scenarios greater than $8\times$, given that the above training-free methods essentially all face collapse, we additionally compare the state-of-the-art training-dependent timestep distillation methods, SenseFlow~\citep{SenseFlow} and Pi-Flow~\citep{PiFlow}, uniformly set to the $4$-step inference version. For the training-required strategies, including all timestep distillation methods and LSSGen, we directly load the parameter weights provided in their official repositories without retraining. When MrFlow is combined with the weights pre-trained by timestep distillation, we denote it as MrFlow$^\dagger$; this strategy directly loads the distilled model at both the low-resolution and high-resolution stages, without any additional training in combination with the multi-resolution pipeline. Each experiment is run on a single Nvidia A100 GPU. The reported speedups are all end-to-end actual acceleration, including text encoding, random noise generation, VAE encoding/decoding, SR, and all diffusion forward passes. All detailed configurations are given in Appendix~\ref{app:setup}.

\subsection{Main Results}
\label{sec:main_results}

In this section we conduct quantitative tests across various acceleration strategies with different characteristics.

\begin{table}[t]
\caption{Comparison of various training-free acceleration methods on FLUX.1-dev and Qwen-Image, generated at $1024\times1024$ resolution.}
\vspace{-0.05in}
\label{tab:flux_tf}
\label{tab:qwen_tf}
\begin{center}
\setlength{\tabcolsep}{0.8mm}
\begin{tabular}{lccccccc}
\toprule
Method & Training-free & NFEs & Speedup $\uparrow$ & Geneval $\uparrow$ & DPG $\uparrow$ & OneIG-En $\uparrow$ & OneIG-Zh $\uparrow$ \\
\midrule
FLUX.1-dev  & -        & 50    & 1x    & 0.66 & 84.07 & 0.44 & - \\
\midrule
ToMA      & $\surd$  & 50    & 1.05x & 0.66 & 83.99 & 0.42 & - \\
ToMA      & $\surd$  & 50    & 1.13x & 0.66 & 83.78 & 0.21 & - \\
\cdashline{1-8}
Teacache  & $\surd$  & 50    & 4.47x & 0.63 & 82.62 & 0.36 & - \\
DB-Taylor & $\surd$  & 50    & 4.63x & 0.64 & \textbf{83.78} & \textbf{0.40} & - \\
RALU      & $\surd$  & 4, 5, 6 & 4.33x & 0.62 & 82.83 & 0.32 & - \\
SPEED     & $\surd$  & 3, 2, 7 & 5.71x & 0.62 & 83.55 & 0.37 & - \\
\textbf{MrFlow}    & \textbf{$\surd$}  & 20, 1 & \textbf{5.78x} & \textbf{0.65} & 82.19 & 0.39 & - \\
\cdashline{1-8}
Teacache  & $\surd$  & 50    & 7.57x & 0.45 & 70.79 & 0.28 & - \\
DB-Taylor & $\surd$  & 50    & 7.86x & 0.46 & 74.23 & 0.28 & - \\
RALU      & $\surd$  & 1, 2, 3 & 8.21x & 0.53 & 77.11 & 0.30 & - \\
SPEED     & $\surd$  & 2, 1, 5 & 8.01x & 0.62 & \textbf{83.43} & 0.36 & - \\
\textbf{MrFlow}    & \textbf{$\surd$}  & 12, 1 & \textbf{8.25x} & \textbf{0.63} & 81.65 & \textbf{0.36} & - \\
\midrule
Qwen-Image & -       & 50x2 & 1x    & 0.88 & 88.67 & 0.53 & 0.53 \\
\midrule
Teacache   & $\surd$ & 50x2 & 4.61x & 0.81 & 84.46 & 0.42 & 0.45 \\
DB-Taylor  & $\surd$ & 50x2 & 4.13x & 0.86 & 86.81 & 0.50 & 0.50 \\
RALU       & $\surd$ & (4, 5, 5)x2 & 5.85x & 0.85 & 87.07 & 0.46 & 0.48 \\
SPEED      & $\surd$ & (2, 1, 7)x2 & 6.29x & 0.79 & 87.50 & 0.47 & 0.51 \\
\textbf{MrFlow}     & \textbf{$\surd$} & (20, 1)x2 & \textbf{6.98x} & \textbf{0.87} & \textbf{88.00} & \textbf{0.54} & \textbf{0.52} \\
\cdashline{1-8}
Teacache   & $\surd$ & 50x2 & 8.93x & 0.26 & 32.51 & 0.16 & 0.19 \\
DB-Taylor  & $\surd$ & 50x2 & 9.34x & 0.09 & 17.43 & 0.09 & 0.12 \\
RALU       & $\surd$ & (1, 2, 4)x2 & 9.51x & 0.84 & 82.60 & 0.44 & 0.46 \\
SPEED      & $\surd$ & (1, 1, 5)x2 & 8.61x & 0.74 & 84.93 & 0.44 & 0.46 \\
\textbf{MrFlow}     & \textbf{$\surd$} & (12, 1)x2 & \textbf{10.3x} & \textbf{0.86} & \textbf{87.10} & \textbf{0.52} & \textbf{0.51} \\
\bottomrule
\end{tabular}
\\[2pt]
\end{center}
\end{table}

\textbf{Training-free acceleration.} All methods natively support FLUX.1-dev, while the native version of Qwen-Image does not adopt the distilled single-branch CFG, so every step needs two forward passes. We mark whether each method requires training, and use dashed lines to separate different levels of speedup. Table~\ref{tab:flux_tf} lists the results of various training-free methods on the two backbones. ToMA, as a representative method based on token pruning and merging, can only attain a very limited speedup of less than $1.1\times$ in measurements. And although its metric drop is not noticeable, the generated images have actually collapsed significantly. The other multi-resolution-based method, RALU, already exhibits a certain accuracy degradation at a speedup of around $5\times$ on FLUX, and when the speedup is further raised to around $8\times$ the generation quality drops even more noticeably; although RALU can better preserve accuracy at high speedup on Qwen-Image, its metrics also drop noticeably when the speedup approaches $10\times$, and the actual image quality is even worse under visual inspection. SPEED provides a stronger frequency-domain multi-resolution baseline than earlier latent-upsampling strategies, but its accuracy drops sharply on Qwen-Image when pushed to the same speed range as MrFlow. Teacache and DB-Taylor likewise preserve good generation results at a speedup of around $4\times$, but when the speedup is further raised to $8\times$, they both exhibit significant collapse; on the newer, larger-scale, and more capable Qwen-Image, the feature-cache-class methods exhibit even larger losses, with a gap of more than $3\%$ already at a speedup of $4$-$5\times$, and direct collapse at the more aggressive speedup of $9\times$. MrFlow, however, demonstrates a greater advantage: it preserves good accuracy under the configuration of $20$ low-resolution steps and $1$ high-resolution step, with a gap of only about $2\%$ on Geneval relative to the native $50$ steps. When set to $12$ low-resolution steps and $1$ high-resolution step, the speedup is raised to $8.25\times$ on FLUX and $10.3\times$ on Qwen-Image, while Geneval remains at $0.63$ and $0.86$, respectively, significantly surpassing feature-cache-based methods such as DB-Taylor and multi-resolution-based strategies such as RALU and SPEED under aggressive configurations.

\textbf{Training-dependent methods.} Table~\ref{tab:flux_distill} further compares MrFlow with training-dependent acceleration methods on the two backbones, including state-of-the-art timestep distillation methods as well as the multi-resolution method LSSGen, which requires training a lightweight latent upsampler and is therefore grouped with the training-dependent strategies rather than the training-free ones. LSSGen attains a $1.5\times$ speedup while preserving accuracy under the officially suggested default configuration with a noise strength of $0.75$; however, when the noising strength is lowered to $0.2$ for a $3.9\times$ speedup, the quality of the actually generated images has already dropped substantially, as detailed in Appendix~\ref{app:d1}. Here $\dagger$ or $\ddagger$ both indicate that the combination of MrFlow with the distillation method directly utilizes the pre-trained distillation weights without additional training in combination with MrFlow, and the two respectively denote the combination of MrFlow with Pi-Flow and with FLUX-schnell~\citep{FLUX2024}. On FLUX, MrFlow under the training-free $12{+}1$-step setting already attains accuracy close to the $4$-step SenseFlow, while MrFlow+Pi-Flow attains a higher speedup than the native $4$-step Pi-Flow while maintaining comparable Geneval. When MrFlow is combined with FLUX-schnell, although the speedup is on par with the native $2$-step FLUX-schnell due to extra overhead such as VAE, DPG is also slightly higher than the native FLUX-schnell. Since timestep distillation-class methods can directly fuse the dual-branch CFG mechanism of Qwen-Image into a single forward, they themselves naturally possess a higher speedup; on Qwen-Image, combining MrFlow with Pi-Flow follows the same trend as on FLUX and reaches up to $25\times$ speedup with an OneIG-Bench loss of no more than $1\%$. These results indicate that MrFlow under the training-free setting can already attain accuracy and efficiency comparable to training-dependent timestep distillation strategies, and has good combinability with timestep distillation strategies.

\begin{table}[t]
\caption{Comparison with training-dependent acceleration methods on FLUX.1-dev and Qwen-Image, generated at $1024\times1024$ resolution.}
\vspace{-0.05in}
\label{tab:flux_distill}
\label{tab:qwen_distill}
\begin{center}
\setlength{\tabcolsep}{1.06mm}
\begin{tabular}{lccccccc}
\toprule
Method & Training-free & NFEs & Speedup $\uparrow$ & Geneval $\uparrow$ & DPG $\uparrow$ & OneIG-En $\uparrow$ & OneIG-Zh $\uparrow$ \\
\midrule
FLUX.1-dev          & -          & 50    & 1x    & 0.66 & 84.07 & 0.44 & - \\
\midrule
LSSGen            & $\times$   & 25, 25 & 1.49x & 0.65 & 83.51 & 0.40 & - \\
LSSGen            & $\times$   & 25, 4  & 3.93x & 0.64 & 82.77 & 0.40 & - \\
\cdashline{1-8}
\textbf{MrFlow}            & \textbf{$\surd$}    & 12, 1 & 8.25x & 0.63 & 81.08 & 0.36 & - \\
SenseFlow         & $\times$   & 4     & 11.1x & 0.63 & 82.42 & 0.37 & - \\
Pi-Flow           & $\times$   & 4     & 9.49x & 0.68 & \textbf{84.40} & \textbf{0.41} & - \\
\textbf{MrFlow$^\dagger$}  & \textbf{$\dagger$}   & 4, 1  & \textbf{11.3x} & \textbf{0.69} & 83.26 & 0.39 & - \\
\cdashline{1-8}
FLUX-schnell       & $\times$   & 2     & \textbf{20.4x} & 0.69 & 85.03 & \textbf{0.41} & - \\
\textbf{MrFlow$^\ddagger$} & \textbf{$\ddagger$} & 1, 1  & 19.5x & \textbf{0.71} & \textbf{85.10} & 0.35 & - \\
\midrule
Qwen-Image       & -        & 50x2 & 1x    & 0.88 & 88.67 & 0.53 & 0.53 \\
\midrule
Pi-Flow          & $\times$ & 4    & 19.6x & \textbf{0.85} & \textbf{88.10} & \textbf{0.53} & \textbf{0.52} \\
\textbf{MrFlow$^\dagger$} & \textbf{$\dagger$} & 4, 1 & \textbf{25.1x} & \textbf{0.85} & 87.52 & 0.52 & 0.51 \\
\bottomrule
\end{tabular}
\\[2pt]
\end{center}
\end{table}

\subsection{Ablation Study}
\label{sec:ablation}

\textbf{Different step configurations.} On Qwen-Image, we conduct combinatorial experiments over all training-free methods and over MrFlow configurations with the low-resolution stage set to $10$, $12$, $16$, $20$ steps and the high-resolution stage set to $1$, $2$, $3$ steps; the trade-off result between Geneval and speedup is shown in Figure~\ref{fig:tradeoff}. Evidently, MrFlow is the only method that consistently outperforms across different models the naive acceleration strategy of directly applying different timestep configurations to the native model, and it can flexibly be configured toward either better accuracy or a more aggressive speedup. Note that MrFlow's performance on test metrics improves significantly as the number of low-resolution steps increases, but has no obvious connection with the high-resolution step configuration. The visual comparison leads to the same conclusion. As shown in Figure~\ref{fig:mrflow-compare-qwen}, adding one high-resolution refinement step already makes the text strokes clear and removes the blur observed in the no-refinement output, while using two refinement steps produces a highly similar result. This agrees with the analysis in Section~\ref{sec:hr_refinement}: after low-strength noising, the trajectory segment close to the clean-image endpoint is sufficiently straight, so a single high-resolution step is an efficient default for detail refinement.

\begin{figure}[t]
\begin{center}
\includegraphics[width=\linewidth]{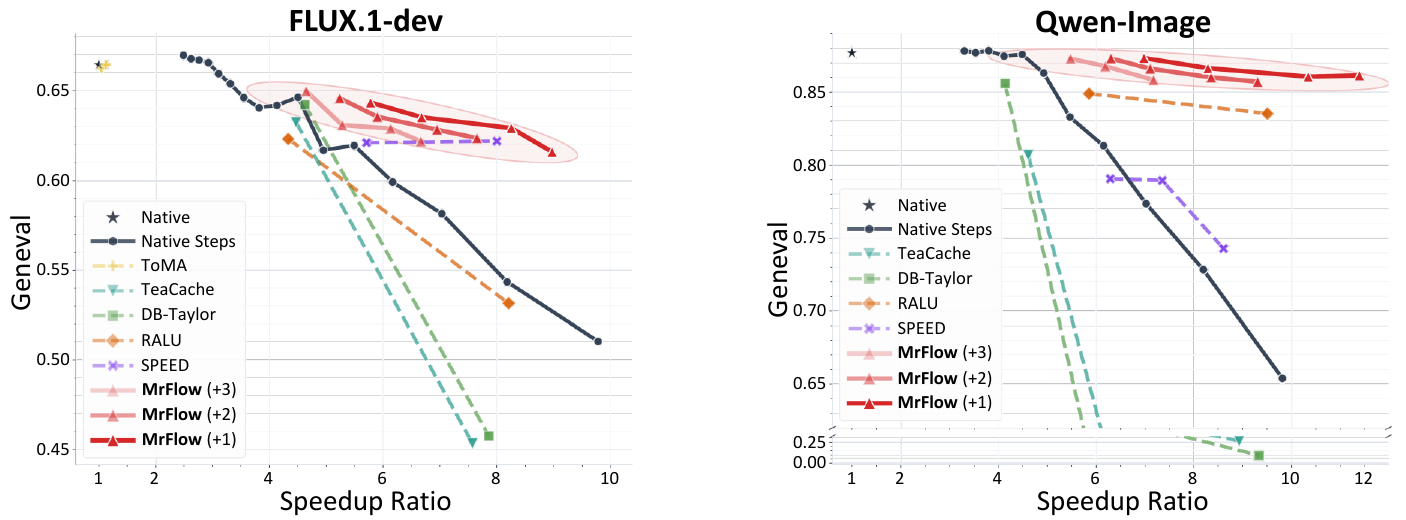}
\end{center}
\caption{Trade-off curves between GenEval score and speedup for various training-free methods on FLUX.1-dev and Qwen-Image. Different shades of MrFlow correspond to different numbers of steps used in the high-resolution refinement stage.}
\label{fig:tradeoff}
\end{figure}

\textbf{Super-resolution network.} On Qwen-Image, we compare several SR choices: direct interpolation, SwinIR, OSEDiff, Real-ESRGAN, and SR-only output without HR refinement. As shown in Table~\ref{tab:sr_ablation}, most SR choices stay close on automatic metrics, while OSEDiff is slightly below Real-ESRGAN. However, the visual comparison in Figure~\ref{fig:sr_compare}, generated with $12$ LR steps and $1$ HR step, shows that these metrics are not always sensitive to local high-frequency SR defects: HR refinement corrects local deviations such as character strokes and text-like details while preserving global structure. Interpolate and SwinIR remain blurry, OSEDiff introduces character inaccuracies after enlargement, and GAN-based Real-ESRGAN offers the best balance between clarity, semantic accuracy, and efficiency.

\begin{table}[t]
\caption{Comparison of different super-resolution method choices on Qwen-Image, generated at $1024\times1024$ resolution.}
\label{tab:sr_ablation}
\begin{center}
\setlength{\tabcolsep}{2.3mm}
\begin{tabular}{lcccccc}
\toprule
SR method & NFEs & Speedup $\uparrow$ & Geneval & DPG $\uparrow$ & OneIG-En $\uparrow$ & OneIG-Zh $\uparrow$ \\
\midrule
Interpolate & (12, 1)x2 & 10.7x & 0.87 & 87.00 & 0.52 & 0.51 \\
SwinIR      & (12, 1)x2 & 4.67x & 0.86 & 86.97 & 0.52 & 0.51 \\
OSEDiff     & (12, 1)x2 & 9.45x & 0.85 & 87.25 & 0.51 & 0.49 \\
Real-ESRGAN & (12, 1)x2 & 10.3x & 0.86 & 87.10 & 0.52 & 0.51 \\
Real-ESRGAN & 12x2      & 14.0x & 0.87 & 87.45 & 0.52 & 0.51 \\
\bottomrule
\end{tabular}
\end{center}
\end{table}

\section{Conclusion}

We propose MrFlow, a training-free multi-resolution strategy for accelerating pretrained flow-matching models. MrFlow combines fast low-resolution structure generation, pixel-space super-resolution with a lightweight GAN-based network, low-strength latent noising, and one-step high-resolution refinement. Without any training or runtime dynamic statistics, this staged pipeline achieves more than $10\times$ end-to-end speedup while keeping the OneIG-Bench loss within $1\%$ relative to native inference, and it offers better quality, efficiency, flexibility, and simplicity than other training-free diffusion acceleration strategies. MrFlow can also be directly combined with timestep distillation without additional training, yielding a further compounded speedup of $25\times$.

\bibliographystyle{iclr2026_conference}
\bibliography{iclr2026_conference}

\newpage

\appendix

\section{Background}
\label{app:background}

\textbf{Acceleration of diffusion generation.} Quantization~\citep{Li2023QDiffusion, zheng2025binarydm, zheng2026first} and efficient attention~\citep{Dao2022FlashAttention} are compression and acceleration strategies oriented to general structures, and their acceleration usually relies on hardware or systems to implement. The diffusion-specific acceleration methods are mainly timestep reduction, feature caching, and token pruning, all of which can improve sampling efficiency without hardware dependence. A major direction for accelerating diffusion models is reducing the number of denoising steps. Existing methods use improved samplers~\citep{Song2021DDIM, Lu2022DPMSolver} or distillation techniques~\citep{Salimans2022ProgressiveDistillation, Song2023ConsistencyModels, Luo2023LCM} to compress a multi-step teacher into a few-step or even one-step generator. Accompanying a speedup of $10\times$ and above, strategies represented by timestep distillation~\citep{SenseFlow, PiFlow} usually rely on fine-tuning training of non-negligible cost, and the trajectories are likely to gradually converge to losing the diversity across different random seeds. Another line of work accelerates diffusion models through feature caching~\citep{TeaCache2024, CacheDit, TaylorSeer, feng2026worldcache}. These methods reuse intermediate features across adjacent denoising steps based on the observation that diffusion features change slowly over time, thereby avoiding repeated computation in the U-Net or DiT backbone. These approaches typically achieve around $2\times$ to $4\times$ inference speedup with relatively small quality degradation. Token compression~\citep{ToMA, lin2026token} reduces computation by dynamically removing or fusing spatial tokens based on their estimated importance during denoising. Existing approaches typically achieve only less than $1.5\times$ inference speedup. These diffusion acceleration strategies either rely on fine-tuning training of non-negligible cost, or can only attain limited speedup.

\textbf{Multi-resolution generation.} Leveraging the characteristic information distribution that images naturally possess at different resolutions, many works have explored multi-stage generation from low resolution to high resolution. This line can be roughly divided into two classes. The first class targets extending the attainable resolution. Early works such as SR3~\citep{SR3}, Cascaded Diffusion Models~\citep{CascadedDiffusion}, and Imagen~\citep{Imagen} adopt cascaded diffusion or diffusion-based super-resolution, generating the main structure at low resolution and then progressively enlarging and supplementing details; Hires.\ fix~\citep{HiresFix} further validated in community practice the feasibility of ``low-resolution generation + high-resolution repainting''; works such as DemoFusion~\citep{DemoFusion} and MegaFusion~\citep{MegaFusion} break through the pretrained resolution ceiling through progressive denoising and multi-scale latent fusion. The goal of this class of work is to generate larger images beyond the native resolution, and they do not concern themselves with acceleration gains at the native resolution. The second class treats multi-resolution as an acceleration means. LSSGen~\citep{LSSGen} trains a lightweight latent upsampler and advocates direct upsampling in the latent space to avoid the artifacts introduced by VAE re-encoding after pixel-space upsampling. Bottleneck Sampling~\citep{BottleneckSampling} adopts interpolation operators such as Lanczos in the latent space, and uses a ``high-low-high'' U-shaped schedule together with timestep rescheduling to achieve training-free acceleration. RALU~\citep{RALU} likewise does nearest-neighbor interpolation in the latent space, but suppresses aliasing artifacts through region-adaptive early-edge upsampling and noise-timestep distribution matching. Fresco~\citep{Fresco} uses a Hadamard orthogonal transform to expand a single parent token into four child tokens, and maintains consistency across resolution stages with a global noise field. SPD~\citep{SPD} changes the upsampling space from the spatial domain to the frequency domain, progressively expanding by frequency band along the denoising trajectory. The above acceleration-class works all complete the upsampling within the latent or frequency domain.

\textbf{Super-resolution and image repainting.} Image super-resolution in the pixel space has long been divided into three classes: regression-based methods, generative-adversarial-based methods, and diffusion-based methods. Methods such as SwinIR~\citep{SwinIR}, based on a Swin Transformer~\citep{SwinTransformer} or convolutional backbone, take L1/L2 reconstruction loss as the main training objective, modeling the conditional mean of the high-resolution image given the low-resolution input. Real-ESRGAN~\citep{RealESRGAN}, building upon ESRGAN~\citep{ESRGAN}, introduces second-order degradation modeling and replaces pure pixel regression with discriminator loss and perceptual loss, aligning the output distribution with the natural-image distribution in the discriminative sense. Diffusion-based super-resolution is represented by OSEDiff~\citep{OSEDiff}, which performs single-step distillation on a pretrained diffusion and introduces the super-resolution input as a condition. Image repainting is another related line. SDEdit~\citep{SDEdit} proposes adding noise on an existing image and then performing reverse denoising, in order to accomplish local editing while preserving semantics; subsequent works such as InstructPix2Pix~\citep{InstructPix2Pix}, Imagic~\citep{Imagic}, and Prompt-to-Prompt~\citep{PromptToPrompt} extend the conditional guidance to various degrees. This class of methods treats noising-denoising as the basic tool of image editing. Unlike these editing methods, MrFlow does not use noising-denoising to preserve and edit an existing image; instead, it uses low-strength noise as a frequency-selective mechanism that preserves the generated low-frequency structure while enabling high-frequency resampling.

\section{Detailed Experimental Setup}
\label{app:setup}

Our main acceleration experiments are conducted on FLUX.1-dev~\citep{FLUX2024} and Qwen-Image-20B~\citep{QwenImage2025}, with some extended experiments on FLUX.2 Klein~\citep{flux-2-2025}, Z-Image and Z-Image-Turbo~\citep{ZImage}. The comparison methods cover typical acceleration strategies of various classes: the feature-cache-based TeaCache~\citep{TeaCache2024} and DB-Taylor (the comprehensive version fusing DBCache and TaylorSeer~\citep{TaylorSeer} in Cache-Dit~\citep{CacheDit}), the token-fusion-based ToMA~\citep{ToMA}, and the multi-resolution LSSGen~\citep{LSSGen}, RALU~\citep{RALU}, and SPEED~\citep{SPD}. For the extreme regime beyond $8\times$, where the training-free methods essentially all collapse, we additionally compare the state-of-the-art timestep distillation methods SenseFlow~\citep{SenseFlow} and Pi-Flow~\citep{PiFlow}, both at their released $4$-step setting. For all training-dependent strategies, i.e.\ the timestep distillation methods and LSSGen, we directly load the official pre-trained weights without retraining. The native models are run with their default parameters, e.g.\ a guidance scale of $3.5$ for FLUX.1-dev and CFG with guidance scale $4.0$ for Qwen-Image. For each baseline we report two operating points, chosen so that TeaCache, DB-Taylor, RALU, and SPEED cover comparable speed regimes; the per-method hyper-parameters at each point are listed in Tables~\ref{tab:flux_method_settings} and~\ref{tab:qwen_method_settings}. When MrFlow is combined with timestep-distillation weights, denoted MrFlow$^\dagger$, it directly loads the distilled model at both stages without any additional training.

\begin{table*}[t]
\caption{Parameter settings on FLUX.1-dev used in the main comparison tables. $N_{\mathrm{LR}}$ and $N_{\mathrm{ref}}$ denote the low-resolution denoising and high-resolution refinement steps of MrFlow.}
\label{tab:flux_method_settings}
\begin{center}
\footnotesize
\setlength{\tabcolsep}{4pt}
\renewcommand{\arraystretch}{1.2}
\begin{tabular}{@{}>{\raggedright\arraybackslash}p{2.5cm}
                   >{\centering\arraybackslash}p{2.1cm}
                   >{\raggedright\arraybackslash}p{8.1cm}@{}}
\toprule
\textbf{Method} & \textbf{NFEs} & \textbf{Parameter setting} \\
\midrule
\multicolumn{3}{@{}l}{\textit{\textbf{FLUX.1-dev} — training-free acceleration}} \\
\midrule
ToMA & 50 &
Token-merging ratio $r\in\{0.5,0.8\}$; destination budget over $M{=}256$ local tiles, merging every $\Delta t{=}3$ steps. \\
TeaCache & 50 &
Triggered by the accumulated relative $\ell_1$-change criterion; thresholds $\tau\in\{1.0,2.5\}$ for the two speed regimes. \\
DB-Taylor & 50 &
DBCache threshold $\delta\in\{0.30,0.60\}$, warm-up $K\in\{4,1\}$; TaylorSeer order $p{=}1$, $(F_n,B_n){=}(1,0)$. \\
RALU & $4,5,6$ / $1,2,3$ &
Three-stage schedule $\mathbf{N}{=}(4,5,6)$ or $(1,2,3)$; transitions $\mathbf{e}{=}(0.30,0.45,1.0)$ or $(0.20,0.30,1.0)$; HR ratio $\rho{=}0.20$ or $0.075$. \\
SPEED & $3,2,7$ / $2,1,5$ &
Official s3 setting with total steps $12/8$, scales $(0.25,0.5,1.0)$, and $\delta{=}0.01$; discrete schedules are derived from the power-spectrum transitions. \\
MrFlow & $20,1$ / $12,1$ &
Pixel-space staged sampling with $(N_{\mathrm{LR}},N_{\mathrm{ref}}){=}(20,1)$ or $(12,1)$; Real-ESRGAN $\times2$ before refinement. \\
\midrule
\multicolumn{3}{@{}l}{\textit{\textbf{FLUX.1-dev} — training-dependent acceleration}} \\
\midrule
LSSGen & $25,25$ / $25,4$ &
Trained latent upsampler; LR stage $25$ steps, HR stage from noise level $\sigma_s\in\{0.75,0.2\}$, giving $25$ and $4$ HR steps. \\
SenseFlow & 4 & Released $4$-step FLUX student model. \\
Pi-Flow & 4 &
Released pi-FLUX $4$-step policy model, FlowMap schedule as in the official adapter. \\
MrFlow\,+\,Pi-Flow & $4,1$ &
LR Pi-Flow $4$ steps, then Real-ESRGAN $\times2$ and one HR refinement step. \\
FLUX-schnell & 2 & Released FLUX.1-schnell $2$-step model. \\
MrFlow\,+\,FLUX-schnell & $1,1$ &
LR FLUX-schnell $1$ step, then Real-ESRGAN $\times2$ and one HR FLUX-schnell step. \\
\bottomrule
\end{tabular}
\end{center}
\end{table*}

\begin{table*}[t]
\caption{Parameter settings on Qwen-Image used in the main comparison tables. ``$\times2$'' marks true-CFG evaluation of the positive and negative branches.}
\label{tab:qwen_method_settings}
\begin{center}
\footnotesize
\setlength{\tabcolsep}{4pt}
\renewcommand{\arraystretch}{1.2}
\begin{tabular}{@{}>{\raggedright\arraybackslash}p{2.5cm}
                   >{\centering\arraybackslash}p{2.1cm}
                   >{\raggedright\arraybackslash}p{8.1cm}@{}}
\toprule
\textbf{Method} & \textbf{NFEs} & \textbf{Parameter setting} \\
\midrule
\multicolumn{3}{@{}l}{\textit{\textbf{Qwen-Image} — training-free acceleration}} \\
\midrule
TeaCache & $50\times2$ &
Accumulated relative $\ell_1$-change criterion; thresholds $\tau\in\{0.75,12.0\}$ for the two speed regimes. \\
DB-Taylor & $50\times2$ &
DBCache threshold $\delta\in\{0.35,1.0\}$, warm-up $K\in\{2,0\}$; TaylorSeer $p{=}1$, $(F_n,B_n){=}(1,0)$ or $(5,0)$, separate CFG branches. \\
RALU & $(4,5,5)\times2$ / $(1,2,4)\times2$ &
Three-stage schedule $\mathbf{N}{=}(4,5,5)$ or $(1,2,4)$; transitions $\mathbf{e}{=}(0.30,0.45,1.0)$ or $(0.20,0.30,1.0)$; HR ratio $\rho{=}0.20$ or $0.10$. \\
SPEED & $(2,1,7)\times2$ / $(1,1,5)\times2$ &
S3 setting with total steps $10/7$, scales $(0.25,0.5,1.0)$, and $\delta{=}0.01$; discrete schedules are derived from the Qwen shifted scheduler. \\
MrFlow & $(20,1)\times2$ / $(12,1)\times2$ &
Pixel-space staged true-CFG sampling with $(N_{\mathrm{LR}},N_{\mathrm{ref}}){=}(20,1)$ or $(12,1)$; Real-ESRGAN $\times2$ before refinement. \\
\midrule
\multicolumn{3}{@{}l}{\textit{\textbf{Qwen-Image} — training-dependent acceleration}} \\
\midrule
Pi-Flow & 4 &
Released pi-Qwen-Image $4$-step policy model, FlowMap schedule as in the official adapter. \\
MrFlow\,+\,Pi-Flow & $4,1$ &
LR Pi-Flow $4$ steps, then Real-ESRGAN $\times2$ and one HR refinement step. \\
\bottomrule
\end{tabular}
\end{center}
\end{table*}

For the quantitative evaluation, we uniformly set it at $1024\times1024$ resolution, as this is the most common resolution setting in the literature. The evaluation metrics adopt three classes: Geneval, DPG-Bench, and OneIG-Bench, which characterize text-to-image quality from three angles respectively: compositional semantics, dense alignment under long prompts, and multi-dimensional attributes under both Chinese and English bilinguals, including character rendering and style consistency. For some analytical experiments, we additionally generate at officially suggested resolutions of the model such as $1328\times1328$ and $1584\times1056$. The experiments are conducted on Nvidia A100 GPUs, with each experiment run on a single GPU. The reported speedups are all end-to-end actual acceleration, including text encoding, random noise generation, VAE encoding/decoding, SR, and all diffusion forward passes. This means that the speedups of some methods may be lower than those in their original papers, as their reported timings often focus on latent-space diffusion while omitting method-specific transition overheads, such as the region selection, token rearrangement, and stage-wise re-noising in RALU, or the spectral expansion with timestep realignment in SPEED.

\section{Stage-wise Analysis}
\label{app:stage_analysis}

This appendix unfolds in the order of the stages of MrFlow in Section~\ref{sec:method}, supplementing the design choices in the main text with complete experimental evidence and theoretical characterization. Sections~\ref{app:c1} through \ref{app:c4} correspond one-to-one with Sections~\ref{sec:lr_generation} through \ref{sec:hr_refinement}, while Section~\ref{app:c5} separately discusses the pervasive division of labor in which the low-resolution stage determines the structure and the high-resolution stage refines the details. To avoid interrupting the narrative of the main text, the formal propositions and their derivations are also arranged collectively in this appendix.

\subsection{Sources of Acceleration at the Low-Resolution Stage}
\label{app:c1}

This section provides the quantitative basis for the claim in Section~\ref{sec:lr_generation} that ``the low-resolution stage is the main acceleration source of MrFlow''. The benefits brought by low resolution can be decomposed into two mutually independent aspects: one is that each sampling step itself is more time-saving, and the other is that fewer sampling steps are required to reach a structure indistinguishable from high resolution. We examine the two separately on Qwen-Image with the prompt set described in Section~\ref{sec:method}.

\textbf{Single-step inference acceleration.} When each side of the image is shrunk by $2\times$, the number of image tokens is reduced by $4\times$. Considering that the number of text tokens during diffusion inference is far smaller than that of image tokens, and that self-attention depends quadratically on the number of image tokens, the measured comprehensive speedup of single-step inference is about $4\times$; the theoretical upper bound of the self-attention part can reach $16\times$, but the overall speedup is diluted to this magnitude by linear operators such as the feed-forward layers.

\textbf{Few-step convergence.} Compared with the direct saving of single-step cost, why low resolution can lock the structure within fewer timesteps is more subtle. We understand this phenomenon from two complementary angles: the low-resolution stage utilizes the text condition more fully, and the ODE path it needs to traverse is shorter.

First, at low resolution the image tokens depend more strongly on the text condition, and can therefore spread the prompt semantics into the overall structure more quickly. To characterize this, for the attention distribution $\mathbf{A}^{(\ell)}\in[0,1]^{N\times N}$ of the $\ell$-th transformer layer, denoting the image-query set as $\mathcal{Q}_{\mathrm{img}}$ and the text-key set as $\mathcal{K}_{\mathrm{txt}}$, we define the text-to-image interaction strength of that layer as the attention mass that the image queries cast onto the text keys:
\begin{equation}
\mathcal{T}^{(\ell)} \;\triangleq\; \frac{1}{|\mathcal{Q}_{\mathrm{img}}|}\sum_{q\in\mathcal{Q}_{\mathrm{img}}}\sum_{k\in\mathcal{K}_{\mathrm{txt}}} \mathbf{A}^{(\ell)}_{q,k}.
\end{equation}
The larger this quantity, the more the image tokens of that layer tend to obtain information through the text condition rather than their own neighborhood. Table~\ref{tab:c1_attn} gives the measurement results averaged over $3$ prompts and $20$ generation trajectories. At all measured layers, namely $\ell\in\{15,30,45\}$, the $\mathcal{T}^{(\ell)}$ of low resolution is higher than that of high resolution, with the gap being especially significant at the shallow and middle layers, indicating that the low-resolution stage allocates a higher proportion of attention mass to the text condition at each inference step.

\begin{table}[t]
\caption{Layer-wise text-to-image attention mass $\mathcal{T}^{(\ell)}$ of Qwen-Image under $10$-step pure txt2img.}
\label{tab:c1_attn}
\begin{center}
\setlength{\tabcolsep}{9.mm}
\begin{tabular}{ccc}
\toprule
Layer $\ell$ & $\mathcal{T}^{(\ell)}_{R=512}$ & $\mathcal{T}^{(\ell)}_{R=1024}$ \\
\midrule
15 & 0.342 & 0.261 \\
30 & 0.378 & 0.356 \\
45 & 0.198 & 0.174 \\
\bottomrule
\end{tabular}
\end{center}
\end{table}

This attention-level difference is corroborated in the final generation quality. We take the CLIP similarity between the generated image and the prompt:
\begin{equation}
\mathrm{CLIP}(\mathbf{x},\mathbf{c}) \;\triangleq\; \cos\!\big(\boldsymbol{\phi}_I(\mathbf{x}),\,\boldsymbol{\phi}_T(\mathbf{c})\big),
\end{equation}
as a downstream indicator of semantic agreement, where $\boldsymbol{\phi}_I$ and $\boldsymbol{\phi}_T$ are the image and text encoders of CLIP~\citep{CLIP} respectively. Table~\ref{tab:c1_clip} compares, under an equal step-count budget, the baseline $\texttt{HR11}$ that generates directly at high resolution with the two-stage schedule $\texttt{LR10\_HR1}$ dominated by low resolution, where $\mathrm{CLIP}_{\mathrm{ref}}$ is the CLIP image similarity between the generated result and the $28$-step high-resolution reference image. The two-stage schedule attains a higher CLIP score while halving the latency, echoing the stronger text interaction in Table~\ref{tab:c1_attn}, jointly indicating that the low-resolution stage can more quickly depict the overall structure aligned with the text.

\begin{table}[t]
\caption{CLIP evaluation of Qwen-Image under an equal total step-count budget, averaged over $3$ prompts $\times\,3$ seeds.}
\label{tab:c1_clip}
\begin{center}
\setlength{\tabcolsep}{7.mm}
\begin{tabular}{lccc}
\toprule
Schedule & Latency / s & $\mathrm{CLIP}_{\mathrm{ref}}\!\uparrow$ & $\mathrm{CLIP}_{\mathrm{txt}}\!\uparrow$ \\
\midrule
$\texttt{HR11}$      & 10.96 & 0.897 & 0.359 \\
$\texttt{LR10\_HR1}$ & \textbf{5.08} & \textbf{0.935} & \textbf{0.360} \\
\bottomrule
\end{tabular}
\end{center}
\end{table}

Second, the ODE path that low resolution needs to traverse is shorter. For a latent trajectory $\{\mathbf{z}_{t_k}\}_{k=0}^{K}$ produced by sampling, define its path length as:
\begin{equation}
\mathcal{L}\!\big(\{\mathbf{z}_{t_k}\}\big) \;\triangleq\; \sum_{k=0}^{K-1}\big\lVert \mathbf{z}_{t_{k+1}} - \mathbf{z}_{t_k}\big\rVert_2,
\end{equation}
namely the accumulated displacement of the ODE polyline in the latent space. To separate out the part needed to depict the global skeleton, we apply a frequency-domain low-pass operator $\mathcal{P}_{\mathrm{low}}$ once to each sampled latent, obtaining a trajectory $\{\mathcal{P}_{\mathrm{low}}\mathbf{z}_{t_k}\}$ that carries only low-frequency information. Specifically, $\mathcal{P}_{\mathrm{low}}$ is obtained by performing a 2D DFT on each latent channel, retaining the components whose radial frequency does not exceed $\rho_c=0.35\,\rho_{\max}$, and then applying the inverse transform. Table~\ref{tab:c1_pathlen} compares the path length of the original trajectory with that of the low-pass trajectory; at each resolution the low-pass trajectory stays stably at about $58\%$ of the original trajectory, meaning that about $42\%$ of the displacement in a complete ODE trajectory is used to depict high-frequency details.

\begin{table}[t]
\caption{Path length of $30$-step pure txt2img trajectories of Qwen-Image, averaged over $3$ prompts $\times\,4$ seeds.}
\label{tab:c1_pathlen}
\begin{center}
\setlength{\tabcolsep}{5.mm}
\begin{tabular}{lccc}
\toprule
Resolution & $\mathcal{L}\!\big(\{\mathbf{z}_{t_k}\}\big)$ & $\mathcal{L}\!\big(\{\mathcal{P}_{\mathrm{low}}\mathbf{z}_{t_k}\}\big)$ & Ratio \\
\midrule
512  & 270.64 & 154.84 & 57.2\% \\
768  & 404.56 & 232.78 & 57.5\% \\
1024 & 539.52 & 311.37 & 57.7\% \\
\bottomrule
\end{tabular}
\end{center}
\end{table}

The target distribution of low-resolution inference is, in terms of bandwidth, equivalent to applying a low-pass once to the high-resolution distribution, so low-resolution inference only needs to traverse the low-frequency part of the path length to reach the target, which agrees with the decomposition in the table above where the low-frequency displacement accounts for only $58\%$, indicating that low resolution can lock the global skeleton with a shorter ODE path. We therefore use this frequency-domain characterization as an upper-bound estimate of the ODE-trajectory cost; together with the aforementioned observation on text utilization, it explains why the low-resolution stage can converge in fewer steps.

\subsection{Two Design Choices of Pixel-Space Super-Resolution}
\label{app:c2}

This section provides support respectively for the two design choices of the super-resolution stage in Section~\ref{sec:pixel_sr}, namely upsampling in the pixel space rather than the latent space, and choosing a GAN-class rather than an L2-class super-resolution model.

\textbf{The pixel space as the upsampling domain.} Section~\ref{sec:pixel_sr} advocates performing upsampling in the pixel space rather than the latent space. Directly enlarging the feature map in the latent space would destroy the local spatial statistics on which the VAE decoder relies, causing the decoded result to exhibit regular grid-like artifacts, a phenomenon clearly visible in the side-by-side comparison between MrFlow and LSSGen in Appendix~\ref{app:d1}. In contrast, the pixel space is the domain in which various super-resolution models natively work; when the super-resolution output is re-encoded back to the latent space through the VAE, the encoder attenuates the high-frequency components in the pixel domain that exceed the training distribution, so that the resulting latent preserves the low-frequency structure while providing a robust starting point for subsequent low-strength noising and high-frequency resampling. The effectiveness of this design is further supported quantitatively by the frequency-band ablation below and the formal analysis in Section~\ref{app:c3}.

\textbf{Frequency-band alignment of the GAN loss.} Why one should choose a GAN-class super-resolution depends on whether the bias it introduces can be eliminated by the downstream refine stage. Since the refine works on the side close to the clean image, its correction capability for noise of different frequency bands is itself asymmetric, so we design the following ablation to directly measure this capability boundary: when adding forward noise in the latent space, we replace the original Gaussian noise with three classes of structured noise, namely high-frequency band-pass, low-frequency band-pass, and channel-correlated, and then observe whether the output after refine can return to the vicinity of the HR clean cloud. If the noise of a certain band has a basically unchanged distance after refine, it indicates that the bias of that band falls within the correctable range of the refine; conversely, if the distance expands significantly, it indicates that the bias of that band cannot be corrected. We measure the kNN distance of the refined output to the HR cloud in the DINOv2~\citep{DINOv2} feature space, with the results summarized in Table~\ref{tab:c2_noise}.

\begin{table}[t]
\caption{kNN distance of the refine output to the HR cloud in the DINOv2 space under different forward noise components on Qwen-Image, averaged over $3$ prompts $\times\,3$ seeds.}
\label{tab:c2_noise}
\begin{center}
\setlength{\tabcolsep}{7.mm}
\begin{tabular}{lcc}
\toprule
Noise component & $s=0.2$ & $s=0.3$ \\
\midrule
SR input (no refine) & 0.155 & 0.155 \\
Gaussian            & \textbf{0.154} & \textbf{0.155} \\
High-freq           & 0.150 & 0.187 \\
Channel-correlated  & 0.164 & 0.180 \\
Low-freq            & 0.252 & 0.498 \\
\bottomrule
\end{tabular}
\end{center}
\end{table}

The distances of Gaussian and high-frequency noise after refine are almost identical to the default SR starting point, whereas the low-frequency noise rises to $0.498$ at strength $0.3$, expanding by more than $200\%$ relative to the default value. This indicates that the velocity field learned at the refine stage mainly undertakes resampling in the high-frequency directions, and is powerless to correct the bias falling in low frequency. This asymmetry explains the preference of Section~\ref{sec:pixel_sr} for the super-resolution model. Interpolation and L2-trained super-resolution can preserve the coarse layout, but their dominant defects are high-frequency attenuation and diffuse blur rather than locally plausible high-frequency residuals. In contrast, the GAN loss gives up the minimum mean-squared error at the pixel level in exchange for sharp and structurally reasonable details, making the remaining bias more localized to the high-frequency band where the refine is effective. Combined with the finding in Section~\ref{app:c1} that the low-pass trajectory accounts for only $58\%$ of the total path length, the working energy of the refine velocity field indeed concentrates in high frequency, so the advantage of GAN-class super-resolution over L2-class super-resolution and simple interpolation in final quality is consistent with the visual ranking of the three in Section~\ref{sec:ablation}.

\subsection{Formal Characterization of High-Frequency Resampling}
\label{app:c3}

This section provides a formal characterization for the high-frequency resampling induced by the low-strength noising in Section~\ref{sec:noise_resampling}. Recall that $\mathbf{x}_{\mathrm{SR}}$ denotes the pixel-space SR output, $\mathbf{z}_0^{\mathrm{SR}}=\mathcal{E}(\mathbf{x}_{\mathrm{SR}})$ denotes its VAE-reencoded latent, and $\mathbf{z}_t^{\mathrm{HR}}=(1-\sigma_t)\mathbf{z}_0^{\mathrm{SR}}+\sigma_t\boldsymbol{\epsilon}$ is the actual noised high-resolution latent fed to the high-resolution refine stage. In the analysis below, $\mathbf{z}_0$ denotes a clean high-resolution latent drawn from the model's native latent distribution $p_0$, while $\mathbf{Z}_0$ and $\mathbf{Z}_t$ denote the corresponding random variables under the standard flow noising model $\mathbf{Z}_t=a\mathbf{Z}_0+b\boldsymbol{\epsilon}$, where $a=1-\sigma_t$ and $b=\sigma_t$. The core is a lower-bound inequality that guides the choice of the noising level; it is derived from the posterior mean of single-step denoising, i.e., the Tweedie/Wiener estimate, and uses the measurable high-frequency power of the clean latent as the local scale of resampling. We first give the direction-wise decomposition of the posterior mean, derive the lower bound from it, then specialize it to the power spectrum and measure it on the backbone of this paper, and finally take a distribution-level view as a supplementary perspective.

\textbf{Weighted decomposition of the posterior mean.} We first establish the object that single-step denoising computes. For a noisy observation, the clean-latent estimate predicted by the flow matching velocity network equals, in the MMSE sense, the posterior mean of the clean latent conditioned on that observation, an equivalence known as the Tweedie formula \citep{Efron2011, Vincent2011, Karras2022}. Therefore, to characterize how denoising treats the SR output, one only needs to characterize the posterior mean $\mathbb{E}[\mathbf{Z}_0\mid\mathbf{Z}_t=\mathbf{z}_t^{\mathrm{HR}}]$ induced by the actual noised high-resolution latent $\mathbf{z}_t^{\mathrm{HR}}$.

To obtain a closed form of this posterior mean, we need to do one piece of local modeling for the prior $p_0$ of the clean latent. $\mathbf{z}_0^{\mathrm{SR}}$ already carries the correct global structure, and what is truly undetermined is the local bias within its neighborhood, so we approximate $p_0$ in the neighborhood of $\mathbf{z}_0^{\mathrm{SR}}$ as an anisotropic Gaussian $\mathcal{N}(\boldsymbol{\mu},\Lambda)$. Anisotropy is necessary here: the energy of natural images is extremely unevenly distributed across different spatial frequencies, with the variance of the low-frequency structural directions far larger than that of the high-frequency detail directions, and a single isotropic covariance cannot reflect this difference. We further expand in the eigenbasis of $\Lambda$, denoting $\Lambda=\mathrm{diag}(\lambda_i)$, where $\lambda_i$ is the variance of the clean data along the eigendirection $i$. The role of taking the eigenbasis is to diagonalize the covariance so that the directions are statistically uncorrelated, whereby the high-dimensional posterior mean decouples along each direction into a set of independent scalar problems, and the following can solve each direction $i$ separately.

The solution of the direction-wise problem relies on the conditional-expectation formula for jointly Gaussian variables. Let scalars $X$ and $Y$ be jointly Gaussian; then the conditional expectation of $X$ given an observation $Y=y$ is \citep[Eq. (2.81)]{Bishop2006}:
\begin{equation}
\mathbb{E}[X\mid Y=y]=\mathbb{E}[X]+\frac{\mathrm{Cov}(X,Y)}{\mathrm{Var}(Y)}\,\big(y-\mathbb{E}[Y]\big).
\end{equation}
Its structure is to take the prior mean $\mathbb{E}[X]$ as the baseline and make a linear correction to the observation deviation $y-\mathbb{E}[Y]$ by the regression coefficient $\mathrm{Cov}(X,Y)/\mathrm{Var}(Y)$, with the correction magnitude increasing as the correlation between the two strengthens and as the observation's own variance decreases. Proposition~\ref{prop:weighted} is exactly the specialization of this formula to the high-resolution latent obtained by noising the SR latent.

\begin{proposition}[Weighted form of high-resolution refinement]
\label{prop:weighted}
Under the above local Gaussian model, taking the actual noised high-resolution latent $z^{\mathrm{HR}}_{t,i}=a\,z^{\mathrm{SR}}_{0,i}+b\,\epsilon_i$ as the observation, the posterior mean along direction $i$ is:
\begin{equation}
\hat z_{0,i}=\kappa_i\,z^{\mathrm{SR}}_{0,i}+(1-\kappa_i)\,\mu_i+\eta_i\,\epsilon_i,
\qquad
\kappa_i=\frac{a^2\lambda_i}{a^2\lambda_i+b^2}=\frac{\mathrm{SNR}_i}{1+\mathrm{SNR}_i},\quad
\eta_i=\frac{b}{a}\,\kappa_i,
\end{equation}
where $\mathrm{SNR}_i=a^2\lambda_i/b^2$.
\end{proposition}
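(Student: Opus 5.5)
The plan is to specialize the jointly Gaussian conditional-expectation formula quoted just before Proposition~\ref{prop:weighted} to each eigendirection of $\Lambda$. Because $\Lambda$ is diagonal in its eigenbasis and the flow noise $\boldsymbol{\epsilon}\sim\mathcal{N}(\mathbf{0},\mathbf{I})$ is isotropic, the components $(Z_{0,i},\epsilon_i)$ are independent across $i$. The vector posterior mean $\mathbb{E}[\mathbf{Z}_0\mid\mathbf{Z}_t]$ therefore has $i$-th component $\mathbb{E}[Z_{0,i}\mid Z_{t,i}]$. To justify this, I will write the joint law of $(\mathbf{Z}_0,\mathbf{Z}_t)$ and check that its cross-covariance $a\Lambda$ and observation covariance $a^2\Lambda+b^2\mathbf{I}$ are both diagonal. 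The matrix regression coefficient $a\Lambda(a^2\Lambda+b^2\mathbf{I})^{-1}$ is then diagonal, so the problem reduces to independent scalar problems.

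For a fixed direction $i$, take $X=Z_{0,i}\sim\mathcal{N}(\mu_i,\lambda_i)$ and $Y=Z_{t,i}=aX+b\epsilon_i$. The moments follow directly:
\begin{equation}
\mathbb{E}[Y]=a\mu_i,\qquad \mathrm{Cov}(X,Y)=a\lambda_i,\qquad \mathrm{Var}(Y)=a^2\lambda_i+b^2.
\end{equation}
The quoted formula then gives
\begin{equation}
\mathbb{E}[X\mid Y=y]=\mu_i+\frac{a\lambda_i}{a^2\lambda_i+b^2}\,(y-a\mu_i).
\end{equation}
Next I substitute the actual observation $y=z^{\mathrm{HR}}_{t,i}=a\,z^{\mathrm{SR}}_{0,i}+b\,\epsilon_i$, where $\epsilon_i$ is the specific noise realization used in the noising step. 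The correction term becomes
\begin{equation}
\frac{a^2\lambda_i}{a^2\lambda_i+b^2}\,(z^{\mathrm{SR}}_{0,i}-\mu_i)+\frac{ab\lambda_i}{a^2\lambda_i+b^2}\,\epsilon_i.
\end{equation}
The first coefficient is $\kappa_i$, so adding $\mu_i$ yields $\kappa_i z^{\mathrm{SR}}_{0,i}+(1-\kappa_i)\mu_i$. The second coefficient equals $\frac{b}{a}\cdot\frac{a^2\lambda_i}{a^2\lambda_i+b^2}=\frac{b}{a}\kappa_i=\eta_i$. Dividing the numerator and denominator of $\kappa_i$ by $b^2$ gives the form $\mathrm{SNR}_i/(1+\mathrm{SNR}_i)$ with $\mathrm{SNR}_i=a^2\lambda_i/b^2$.

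No step here is computationally hard. The main point to get right is conceptual: the observation plugged in, $z^{\mathrm{HR}}_{t,i}$, is built from $z^{\mathrm{SR}}_{0,i}$, which need not be a draw from the prior. The posterior mean is therefore a fixed function of $y$ that is merely evaluated at this $y$. I will state explicitly that the conditional-expectation formula depends only on the model's joint law of $(\mathbf{Z}_0,\mathbf{Z}_t)$, not on where the observed value came from. I will also note that $a=1-\sigma_t>0$, since $\sigma_t\in(0,1)$, so dividing by $a$ in $\eta_i$ is legitimate.

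Finally, I will remark that identifying $\hat z_{0,i}$ with the network's one-step clean estimate relies on the Tweedie/MMSE equivalence cited above. This identification is outside the proposition itself, which only concerns the posterior mean under the local Gaussian model.
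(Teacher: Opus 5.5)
Your proposal is correct and follows the paper's proof step for step: the same scalar choice $X=Z_{0,i}$, $Y=aZ_{0,i}+b\epsilon_i$, the same three moments $a\mu_i$, $a^2\lambda_i+b^2$, $a\lambda_i$, then substitution of the actual observation and regrouping into $\kappa_i$, $1-\kappa_i$, $\eta_i=\tfrac{b}{a}\kappa_i$. Your explicit check that $a\Lambda(a^2\Lambda+b^2\mathbf{I})^{-1}$ is diagonal, and your remark that the formula is only evaluated at an observation not drawn from the prior, are small clarifications that the paper leaves to the surrounding text.
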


\begin{proof}
Take $X=Z_{0,i}$ and $Y=Z_{t,i}=aZ_{0,i}+b\epsilon_i$ under the standard noising model. Both are linear combinations of the Gaussian $Z_{0,i}$ and the Gaussian noise $\epsilon_i$, hence jointly Gaussian, and the above conditional-expectation formula applies. Its three statistics are computed one by one as follows. The observation mean is obtained by noting $\mathbb{E}[\epsilon_i]=0$, $\mathbb{E}[Z_{0,i}]=\mu_i$, giving $\mathbb{E}[Z_{t,i}]=a\mu_i$. The observation variance is obtained from the independence of $Z_{0,i}$ and $\epsilon_i$ with respective variances $\lambda_i$ and $1$:
\begin{equation}
\mathrm{Var}(Z_{t,i})=a^2\lambda_i+b^2.
\end{equation}
The noise term in the covariance is eliminated due to independence, leaving only the signal term:
\begin{equation}
\mathrm{Cov}(Z_{0,i},Z_{t,i})=\mathrm{Cov}(Z_{0,i},\,aZ_{0,i})=a\lambda_i.
\end{equation}
Substituting into the conditional-expectation formula gives:
\begin{equation}
\mathbb{E}[Z_{0,i}\mid Z_{t,i}=z^{\mathrm{HR}}_{t,i}]=\mu_i+\frac{a\lambda_i}{a^2\lambda_i+b^2}\,(z^{\mathrm{HR}}_{t,i}-a\mu_i).
\end{equation}
Finally, substitute the actual noised high-resolution observation $z^{\mathrm{HR}}_{t,i}=a\,z^{\mathrm{SR}}_{0,i}+b\,\epsilon_i$. Denoting $\kappa_i=a^2\lambda_i/(a^2\lambda_i+b^2)$, the regression coefficient $a\lambda_i/(a^2\lambda_i+b^2)=\kappa_i/a$, hence
\begin{equation}
\hat z_{0,i}=\mu_i+\frac{\kappa_i}{a}\big(a\,z^{\mathrm{SR}}_{0,i}+b\,\epsilon_i-a\mu_i\big)
=\mu_i+\kappa_i\big(z^{\mathrm{SR}}_{0,i}-\mu_i\big)+\frac{b}{a}\kappa_i\,\epsilon_i.
\end{equation}
Rearranging gives the SR-estimate coefficient $\kappa_i$, the prior-mean coefficient $1-\kappa_i$, and the noise coefficient $\eta_i=\tfrac{b}{a}\kappa_i$.
\end{proof}

$\kappa_i$ is the classical Wiener gain, characterizing the weight that the SR estimate occupies in the reconstruction along direction $i$. After rewriting it as $\kappa_i=\mathrm{SNR}_i/(1+\mathrm{SNR}_i)$, its behavior is clear at a glance: when $\lambda_i$ is large it corresponds to low-frequency structure, the signal-to-noise ratio is high, $\kappa_i\to1$, and the SR value is preserved; when $\lambda_i$ is small it corresponds to high-frequency detail, the signal-to-noise ratio is low, $\kappa_i\to0$, the reconstruction returns to the prior, and the detail in that direction is resampled. The gain $\kappa_i$ is controlled by the clean-data variance and the injected noise level, while the actual discrepancy between $z^{\mathrm{SR}}_{0,i}$ and the corresponding clean latent affects the residual after reconstruction and the validity of the local approximation. Thus, the lower bound below should not be interpreted as correcting arbitrary SR errors; it characterizes the noise scale needed for the prior to take over a high-frequency band once the SR output is already locally plausible.

This also clarifies the role of the SR module in the theory. The bound does not rank SR methods by their average distortion; it identifies the frequency regime in which low-strength noising can be effective. Interpolation and regression-based SR can preserve the coarse low-frequency layout, but their residuals are dominated by missing or averaged high-frequency content and by diffuse blur that often spreads into intermediate frequencies. Such errors are not simply local high-frequency samples to be resampled. In contrast, GAN-based SR keeps the output closer to the natural high-resolution image manifold and supplies sharp high-frequency content, even if some local details are semantically imperfect. Its remaining uncertainty is therefore better matched to the low-variance high-frequency directions where a small $\sigma_t$ can make the high-resolution prior take over. The theoretical lower bound and the empirical SR comparison are connected through this residual-frequency condition.

\textbf{The noise-level lower bound.} We now specialize the above decomposition to the operating regime targeted by MrFlow: the SR latent largely inherits the low-frequency layout from the low-resolution stage, and the residual uncertainty is concentrated in high-frequency directions. Suppose such residuals concentrate in the eigendirection set $\mathcal{H}$, on which the clean-data variance is denoted $\lambda_{\mathrm{hf}}=\max_{i\in\mathcal{H}}\lambda_i$, taking the supremum to cover the worst direction within that band. Proposition~\ref{prop:weighted} shows that whether direction $i$ is dominated by the SR or returns to the prior depends on the relative amplitude of the injected noise and the clean signal in that direction: for the data prior to have the capability to cover and correct the potential errors of the SR on $\mathcal{H}$, a sufficient condition is that the amplitude of the injected noise is not lower than that of the clean signal in that band. From the composition of the noised SR latent in Proposition~\ref{prop:weighted}, the standard deviation of the clean signal along direction $i$ is $a\sqrt{\lambda_i}$, and that of the injected noise is $b$, so this condition is written as $b\ge a\sqrt{\lambda_{\mathrm{hf}}}$. It is equivalent to the signal-to-noise ratio $\mathrm{SNR}_{\mathrm{hf}}=a^2\lambda_{\mathrm{hf}}/b^2\le1$, namely the Wiener gain $\kappa_{\mathrm{hf}}\le\tfrac12$, i.e., the SR has at most as much say as the prior in high frequency.

\begin{proposition}[Resampling noise-level lower bound]
\label{prop:lowerbound}
Under the setting of Proposition~\ref{prop:weighted}, the amplitude of the injected noise in the high-frequency error directions is not lower than that of the clean signal in those directions, i.e., $\mathrm{SNR}_{\mathrm{hf}}\le1$ and $\kappa_{\mathrm{hf}}\le\tfrac12$, if and only if:
\begin{equation}
\frac{\sigma_t}{1-\sigma_t}\ \ge\ \sqrt{\lambda_{\mathrm{hf}}}
\qquad\Longleftrightarrow\qquad
\sigma_t\ \ge\ \sigma_t^\star=\frac{\sqrt{\lambda_{\mathrm{hf}}}}{1+\sqrt{\lambda_{\mathrm{hf}}}}.
\end{equation}
Equivalently, in terms of the rectified-flow equivalent signal-to-noise ratio, $\mathrm{SNR}(t)\le 1/\lambda_{\mathrm{hf}}$.
\end{proposition}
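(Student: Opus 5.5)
The plan is a direct algebraic chain of equivalences, built on the quantities already introduced in Proposition~\ref{prop:weighted}. First, I will fix the meaning of the condition. Along a direction $i\in\mathcal{H}$, the clean-signal standard deviation in $\mathbf{z}_t^{\mathrm{HR}}$ is $a\sqrt{\lambda_i}$ and the injected-noise standard deviation is $b$. Requiring the noise amplitude to dominate on the whole band means $b\ge a\sqrt{\lambda_i}$ for every $i\in\mathcal{H}$. Because $\lambda_{\mathrm{hf}}=\max_{i\in\mathcal{H}}\lambda_i$, this is equivalent to the single inequality $b\ge a\sqrt{\lambda_{\mathrm{hf}}}$. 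This reduction from the band to its worst direction is the only place where the supremum definition enters.

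Next, I will translate $b\ge a\sqrt{\lambda_{\mathrm{hf}}}$ into the three stated forms. Since $a,b>0$ for $\sigma_t\in(0,1)$, squaring and dividing are reversible, so the following are all equivalent:
\begin{equation}
b\ge a\sqrt{\lambda_{\mathrm{hf}}}
\;\Longleftrightarrow\; \mathrm{SNR}_{\mathrm{hf}}=\frac{a^2\lambda_{\mathrm{hf}}}{b^2}\le 1
\;\Longleftrightarrow\; \kappa_{\mathrm{hf}}=\frac{\mathrm{SNR}_{\mathrm{hf}}}{1+\mathrm{SNR}_{\mathrm{hf}}}\le\tfrac12 .
\end{equation}
The last step uses that $x\mapsto x/(1+x)$ is strictly increasing on $[0,\infty)$ and equals $\tfrac12$ exactly at $x=1$. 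Substituting $a=1-\sigma_t$ and $b=\sigma_t$ gives $\sigma_t/(1-\sigma_t)\ge\sqrt{\lambda_{\mathrm{hf}}}$.

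To solve for $\sigma_t$, I will multiply by $1-\sigma_t>0$, which gives $\sigma_t(1+\sqrt{\lambda_{\mathrm{hf}}})\ge\sqrt{\lambda_{\mathrm{hf}}}$ and hence $\sigma_t\ge\sigma_t^\star$. Equivalently, one can note that $\sigma\mapsto\sigma/(1-\sigma)$ is a strictly increasing bijection from $(0,1)$ onto $(0,\infty)$, so the inequality inverts cleanly.

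For the final form, I will use the rectified-flow signal-to-noise ratio $\mathrm{SNR}(t)=a^2/b^2=(1-\sigma_t)^2/\sigma_t^2$. Then $\mathrm{SNR}_{\mathrm{hf}}=\lambda_{\mathrm{hf}}\,\mathrm{SNR}(t)$, so $\mathrm{SNR}_{\mathrm{hf}}\le 1$ is the same as $\mathrm{SNR}(t)\le 1/\lambda_{\mathrm{hf}}$, provided $\lambda_{\mathrm{hf}}>0$.

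I expect no step to be a genuine obstacle. The only care needed is to keep every manipulation a true equivalence, so the \emph{if and only if} holds. That requires positivity of $a$ and $b$, monotonicity of the maps involved, and $\lambda_{\mathrm{hf}}>0$; the degenerate case $\lambda_{\mathrm{hf}}=0$ is trivial, with $\sigma_t^\star=0$. It also requires stating clearly that the per-band condition is defined through the worst direction.
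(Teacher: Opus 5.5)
Your proposal is correct and follows essentially the same route as the paper: write the band condition as the worst-direction inequality $b\ge a\sqrt{\lambda_{\mathrm{hf}}}$, rewrite it as $\mathrm{SNR}_{\mathrm{hf}}\le 1$ and $\kappa_{\mathrm{hf}}\le\tfrac12$, then substitute $a=1-\sigma_t$, $b=\sigma_t$ and solve for $\sigma_t$. Your extra care goes slightly beyond the paper: you check positivity of $a,b$ and the monotonicity of $x\mapsto x/(1+x)$, and you verify the final $\mathrm{SNR}(t)\le 1/\lambda_{\mathrm{hf}}$ form via $\mathrm{SNR}_{\mathrm{hf}}=\lambda_{\mathrm{hf}}\,\mathrm{SNR}(t)$, which the paper only asserts; this makes the if-and-only-if fully rigorous.
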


Substituting $a=1-\sigma_t$ and $b=\sigma_t$ into the amplitude condition $b\ge a\sqrt{\lambda_{\mathrm{hf}}}$ gives $\sigma_t\ge(1-\sigma_t)\sqrt{\lambda_{\mathrm{hf}}}$, rearranging both ends yields $\sigma_t/(1-\sigma_t)\ge\sqrt{\lambda_{\mathrm{hf}}}$, and solving for $\sigma_t$ gives $\sigma_t\ge\sqrt{\lambda_{\mathrm{hf}}}/(1+\sqrt{\lambda_{\mathrm{hf}}})=\sigma_t^\star$. $\sigma_t^\star$ depends only on the measurable clean-data variance $\lambda_{\mathrm{hf}}$. This bound is a sufficient but not necessary condition: $\sigma_t\ge\sigma_t^\star$ guarantees that the prior has the capability to cover the high frequency and thus correct the potential errors of the SR; when the SR high frequency is itself reliable, a smaller $\sigma_t$, corresponding to $\kappa_{\mathrm{hf}}$ close to $1$ and preserving more SR content, is also feasible. Therefore $\kappa_{\mathrm{hf}}=\tfrac12$ is not a mandatory value, but the physically non-arbitrary reference point of equal power of noise and signal. The other extreme $\kappa_{\mathrm{hf}}\to0$ requires $\sigma_t\to1$, corresponding to a re-generation that discards all SR information, i.e., the fidelity zero-endpoint of the realism-faithfulness trade-off. Combined with the observation that an excessively large noise level would wash away the correct details of the SR and increase the required number of steps, $\sigma_t$ constitutes a two-sided working interval, with the practical value falling on the order of $\sigma_t^\star$, slightly above it to guarantee the correction capability, and yet close to it to preserve the SR details.

\textbf{Spectral specialization and measurement.} The directions $\mathcal{H}$ and the variances $\lambda_i$ in Proposition~\ref{prop:lowerbound} are eigenquantities of the prior covariance $\Lambda$, which cannot be reliably estimated from a small number of samples in $D\sim10^5$ dimensions. We circumvent this difficulty using the approximate spatial stationarity of the latent. When the statistics of a random field are invariant under spatial translation, its covariance is circulant in structure, and the eigenvectors of a circulant matrix are exactly the Fourier basis. Therefore, under the spatial-stationarity approximation, the eigenvectors of $\Lambda$ degenerate to spatial frequency modes, and the eigenvalues $\lambda_i$ degenerate to the radial power spectrum $\lambda(k)$ indexed by frequency. This degeneration turns the estimation of the high-dimensional covariance spectrum into the estimation of the power spectrum, the latter obtainable stably from the $H_H\times W_H$ spatial samples of a single latent. The direction set $\mathcal{H}$ where the SR error lies corresponds to the high-frequency band, hence $\lambda_{\mathrm{hf}}=\lambda(k_{\mathrm{high}})$, and the lower bound of Proposition~\ref{prop:lowerbound} specializes correspondingly to the per-band $\sigma_t^\star(k)=\sqrt{\lambda(k)}/(1+\sqrt{\lambda(k)})$.

We measure $\lambda(k)$ via band-pass FFT on $18$ clean high-resolution latents of shape $16\times128\times128$, dumped from the main experiment in Section~\ref{sec:main_results}, with the variance of each band passed back to the spatial domain to guarantee the absolute scale, and the Parseval check $\sum_k\lambda(k)=\mathrm{Var}(\mathbf{z}_0)$ has a ratio of $1.000$; the results are shown in the table below.

\begin{table}[t]
\caption{Per-band power spectrum and the corresponding noise-level lower bound on the latent of this paper's backbone.}
\label{tab:c3_spectrum}
\begin{center}
\setlength{\tabcolsep}{5.mm}
\begin{tabular}{lcc}
\toprule
Band & $\sqrt{\lambda(k)}$ & $\sigma_t^\star=\sqrt{\lambda}/(1+\sqrt{\lambda})$ \\
\midrule
Low-freq $k<0.10$        & 0.32 & 0.24 \\
Mid-freq $0.10\le k<0.25$ & 0.18 & 0.16 \\
High-freq $k\ge0.25$     & 0.08 & 0.075 \\
\bottomrule
\end{tabular}
\end{center}
\end{table}

The clean power spectrum concentrates strongly in low frequency, with the low-frequency-band standard deviation about four times that of the high-frequency band, confirming the premise that the high-frequency band $\lambda_{\mathrm{hf}}$ is small. For high-frequency residuals, Proposition~\ref{prop:lowerbound} gives $\sigma_t^\star\approx0.075$, consistent with the empirical value $\sigma_t\in[0.1,0.15]$ taken in the main text slightly above the lower bound. This value is determined by the measured clean high-frequency power and serves as the local resampling scale once the SR residuals are confined to that band. If the residual instead falls into the mid- or low-frequency bands, the measured lower bound increases to $0.16$ or $0.24$, respectively, and a refinement strength of that order starts to approach partial re-generation. This is why bicubic or $L_2$ upsampling, despite preserving the coarse layout, is poorly matched to MrFlow: their dominant error is not a locally plausible high-frequency sample, but attenuation and blur. This also delimits the applicable range of the claim that ``pixel-space rescaling necessarily introduces persistent blur'': it holds for low-frequency or diffuse blur, but not for GAN super-resolution whose residuals are closer to the low-variance high-frequency band.

\textbf{Supplementary distribution-level view.} Propositions~\ref{prop:weighted} and \ref{prop:lowerbound} are single-point, direction-wise characterizations. As a supplementary perspective, noising also draws the SR output distribution closer to the target distribution at a quantifiable rate at the level of the overall distribution. To align this view with the flow noising form in Section~\ref{sec:noise_resampling}, consider the rescaled observation $\tilde{\mathbf{z}}_t^{\mathrm{HR}}=\mathbf{z}_t^{\mathrm{HR}}/(1-\sigma_t)=\mathbf{z}_0^{\mathrm{SR}}+\frac{\sigma_t}{1-\sigma_t}\boldsymbol{\epsilon}$. Let $\hat p_0$ and $p_0$ respectively be the distributions of the SR latent $\mathbf{z}_0^{\mathrm{SR}}$ and the clean latent $\mathbf{z}_0$; the corresponding marginal distributions in this additive-noise view are $\hat p_t=\hat p_0\ast\mathcal{N}(\mathbf{0},(\frac{\sigma_t}{1-\sigma_t})^2\mathbf{I})$ and $p_t=p_0\ast\mathcal{N}(\mathbf{0},(\frac{\sigma_t}{1-\sigma_t})^2\mathbf{I})$.

\begin{proposition}[Distribution-level smoothing after noising]
\label{prop:kl}
Under the above setting:
\begin{equation}
\mathrm{KL}\!\big(\hat p_t\,\big\|\,p_t\big)\;\le\; \frac{(1-\sigma_t)^2}{2\sigma_t^2}\,W_2^2(\hat p_0,\,p_0).
\end{equation}
\end{proposition}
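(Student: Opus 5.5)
The plan is a coupling argument followed by convexity of KL, using the closed-form KL between two Gaussians with the same covariance. Write $s=\sigma_t/(1-\sigma_t)$ for the standard deviation of the additive noise in the rescaled view, so that $\hat p_t=\hat p_0\ast\mathcal{N}(\mathbf{0},s^2\mathbf{I})$ and $p_t=p_0\ast\mathcal{N}(\mathbf{0},s^2\mathbf{I})$. The target constant is then exactly $\frac{1}{2s^2}=\frac{(1-\sigma_t)^2}{2\sigma_t^2}$, which indicates the argument should be the standard Gaussian-smoothing bound $\mathrm{KL}(\mu\ast\gamma_s\,\|\,\nu\ast\gamma_s)\le W_2^2(\mu,\nu)/(2s^2)$, specialized to the noise scale of Section~\ref{sec:noise_resampling}.

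First, fix an optimal coupling $\pi$ of $(\hat p_0,p_0)$. It exists for measures with finite second moments, which we assume, since otherwise the right-hand side is infinite and there is nothing to prove. By optimality,
\[
\int\lVert x-y\rVert_2^2\,d\pi(x,y)=W_2^2(\hat p_0,p_0).
\]
Both smoothed marginals can then be written as mixtures over the same coupling:
\[
\hat p_t=\int\mathcal{N}(x,s^2\mathbf{I})\,d\pi(x,y),\qquad p_t=\int\mathcal{N}(y,s^2\mathbf{I})\,d\pi(x,y),
\]
because integrating out $y$ (resp.\ $x$) recovers the convolution with the correct marginal.

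Second, apply the joint convexity of KL in its pair of arguments. In integral form this is
\[
\mathrm{KL}\Big(\int P_\omega\,d\pi\,\Big\|\int Q_\omega\,d\pi\Big)\le\int\mathrm{KL}(P_\omega\|Q_\omega)\,d\pi(\omega),
\]
which follows from the data-processing inequality: consider the joint laws of $(\omega,\cdot)$ and marginalize out $\omega$. This gives
\[
\mathrm{KL}(\hat p_t\|p_t)\le\int\mathrm{KL}\big(\mathcal{N}(x,s^2\mathbf{I})\,\|\,\mathcal{N}(y,s^2\mathbf{I})\big)\,d\pi(x,y).
\]

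Third, evaluate the Gaussian KL. Two Gaussians with equal isotropic covariance $s^2\mathbf{I}$ satisfy
\[
\mathrm{KL}\big(\mathcal{N}(x,s^2\mathbf{I})\,\|\,\mathcal{N}(y,s^2\mathbf{I})\big)=\frac{\lVert x-y\rVert_2^2}{2s^2},
\]
since the trace and log-determinant terms cancel. Integrating against $\pi$ yields $W_2^2/(2s^2)$, and substituting $s=\sigma_t/(1-\sigma_t)$ gives the stated bound.

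The main obstacle is the mixture/convexity step. The difficulty is not analytic depth but making the infinite-mixture version of joint convexity rigorous. I would handle it through the chain rule for KL on the joint laws $\pi(dx,dy)\,\mathcal{N}(x,s^2\mathbf{I})(dz)$ versus $\pi(dx,dy)\,\mathcal{N}(y,s^2\mathbf{I})(dz)$: their KL equals the integrated conditional KL, and it upper-bounds the KL of the $z$-marginals by data processing.

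A minor point is that the statement is phrased for the rescaled observation $\tilde{\mathbf z}_t^{\mathrm{HR}}$. If one instead wants the bound for the unrescaled $\mathbf z_t^{\mathrm{HR}}$, KL is invariant under the common bijection $z\mapsto(1-\sigma_t)z$, so the same bound holds.
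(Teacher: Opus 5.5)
Your proof is correct and follows the paper's route. The paper simply invokes the Gaussian smoothing inequality $\mathrm{KL}(\mu\ast\gamma_s\,\|\,\nu\ast\gamma_s)\le W_2^2(\mu,\nu)/(2s^2)$, cited from Polyanskiy--Wu, with $s=\sigma_t/(1-\sigma_t)$, exactly as you set it up; your coupling, joint-convexity and equal-covariance Gaussian KL steps are the standard proof of that cited inequality, so they make the argument self-contained rather than giving a different route.
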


By the Gaussian smoothing inequality \citep[Thm. 1]{PolyanskiyWu2016}, for any $\mu,\nu$ we have $\mathrm{KL}(\mu\ast\gamma_s\,\|\,\nu\ast\gamma_s)\le \tfrac{1}{2s^2}W_2^2(\mu,\nu)$; substituting $\hat p_0$, $p_0$ and $s=\sigma_t/(1-\sigma_t)$ gives the result. This bound, when $\mathrm{KL}=\mathcal{O}(1)$, requires $\sigma_t/(1-\sigma_t)\gtrsim W_2(\hat p_0,p_0)$, a stronger sufficient condition for overall alignment than Proposition~\ref{prop:lowerbound}, corresponding to convolving the two distributions to the scale at which their supports overlap. The direction-wise lower bound of Proposition~\ref{prop:lowerbound}, in contrast, exploits the structural information that the SR global structure is already correct and only the high frequency needs resampling, so the required $\sigma_t$ is far smaller than the overall-alignment scale of Proposition~\ref{prop:kl}, which once again corroborates that low-strength noising is sufficient precisely because the SR only errs in high frequency.

\subsection{Single-Step Sufficiency of High-Resolution Detail Refinement}
\label{app:c4}

This section examines why the high-resolution refine stage can use a single denoising step under the recommended low-strength regime. We first confirm this numerically by comparing it with multi-step references, then explain it from the time-varying characteristics of the velocity field along the denoising trajectory, and finally explain why placing the limited step-count budget at the end of the schedule is the most cost-effective.

\textbf{Numerical single-step sufficiency.} Taking the $8$-step denoising result under the same SR starting point and the same refinement noise seed as the reference, we scan combinations of $K_H\in\{1,2,3,5\}$ and $\sigma_t\in\{0.1,0.3\}$, where $\sigma_t=0.3$ is included as a larger-noise diagnostic setting, and record the CLIP image similarity between the refined image and the reference image:
\begin{equation}
\mathrm{CLIP}_{\mathrm{ref}}(\hat{\mathbf{x}}_0^{(K_H)},\,\hat{\mathbf{x}}_0^{(8)}) \;=\; \cos\!\big(\boldsymbol{\phi}_I(\hat{\mathbf{x}}_0^{(K_H)}),\,\boldsymbol{\phi}_I(\hat{\mathbf{x}}_0^{(8)})\big).
\end{equation}
The results are summarized in Table~\ref{tab:c4_clip}. At $s=0.1$, single-step denoising already attains $0.9974$, only $0.0025$ away from the $0.9999$ of $5$-step denoising, making the additional refinement steps marginal under the recommended setting. By contrast, the larger diagnostic strength $s=0.3$ requires more steps to approach the same reference. This is numerically consistent with the two-sided working interval of strength given by Proposition~\ref{prop:lowerbound}: when the noised SR latent is kept near the clean endpoint, the high-resolution refinement has only weak dependence on $K_H$, and $K_H=1$ is therefore sufficient in our default configuration.

\begin{table}[t]
\caption{CLIP similarity between Qwen-Image high-resolution refine and the $8$-step reference, $3$ prompts.}
\label{tab:c4_clip}
\begin{center}
\setlength{\tabcolsep}{6.mm}
\begin{tabular}{lcccc}
\toprule
Strength $s$ & $K_H=1$ & $K_H=2$ & $K_H=3$ & $K_H=5$ \\
\midrule
0.1 & \textbf{0.9974} & 0.9995 & 0.9997 & 0.9999 \\
0.3 & 0.9902 & 0.9957 & 0.9962 & 0.9996 \\
\bottomrule
\end{tabular}
\end{center}
\end{table}

\textbf{Time-varying characteristics of the velocity field.} The above single-step sufficiency can be explained by the straightness of the velocity field along the denoising trajectory. The local truncation error of one Euler step is approximately:
\begin{equation}
e(t,h) \;\approx\; \tfrac{1}{2}\,h^2\,\big\lVert \dot{\mathbf{v}}_\theta(\mathbf{x}_t,t)\big\rVert,
\end{equation}
where $\dot{\mathbf{v}}_\theta$ is the total differential of the velocity field with respect to the state; the smaller its norm and the straighter the trajectory, the smaller the single-step Euler error. Directly computing $\dot{\mathbf{v}}_\theta$ is expensive, so we instead measure three complementary quantities along the raw-velocity sequence $\{\mathbf{v}_k\}$ of adjacent denoising steps $t_k\to t_{k+1}$. The first is the velocity magnitude:
\begin{equation}
\lVert\mathbf{v}\rVert \;=\; \tfrac{1}{K}\sum_{k}\lVert\mathbf{v}_k\rVert_2;
\end{equation}
the second is the velocity change between adjacent steps, as a difference proxy of $\lVert\dot{\mathbf{v}}_\theta\rVert$:
\begin{equation}
\Delta\lVert\mathbf{v}\rVert \;=\; \tfrac{1}{K-1}\sum_{k}\lVert\mathbf{v}_{k+1}-\mathbf{v}_k\rVert_2;
\end{equation}
the third is the turn angle between adjacent velocity directions, characterizing the degree of trajectory bending:
\begin{equation}
\theta \;=\; \tfrac{1}{K-1}\sum_{k}\arccos\!\frac{\langle\mathbf{v}_{k+1},\,\mathbf{v}_k\rangle}{\lVert\mathbf{v}_{k+1}\rVert\,\lVert\mathbf{v}_k\rVert}.
\end{equation}
We record the above three quantities from the raw velocity predictions of the Qwen-Image denoising transformer during high-resolution refinement, with the results summarized in Table~\ref{tab:c4_velocity}. All three rise monotonically as the strength increases; the smaller the strength, the closer the starting point to $t=0$, the flatter the velocity field, the weaker the trajectory bending, and the smaller the single-step Euler error. This precisely explains the numerical results of Table~\ref{tab:c4_clip}: under small strength, the refinement starts in the flattest segment of the velocity field, so the first Euler step already accounts for the dominant correction.

\begin{table}[t]
\caption{Time-varying statistics of the raw velocity of Qwen-Image high-resolution refine, $3$ prompts $\times\,2$ seeds.}
\label{tab:c4_velocity}
\begin{center}
\setlength{\tabcolsep}{6.mm}
\begin{tabular}{lccc}
\toprule
Strength $s$ & $\lVert \mathbf{v}_\theta\rVert$ & $\Delta\lVert \mathbf{v}_\theta\rVert$ & turn angle / rad \\
\midrule
0.05 & 0.924 & 0.095 & 0.087 \\
0.1  & 0.970 & 0.119 & 0.107 \\
0.2  & 1.002 & 0.140 & 0.126 \\
0.3  & 1.017 & 0.151 & 0.136 \\
0.5  & 1.025 & 0.170 & 0.157 \\
\bottomrule
\end{tabular}
\end{center}
\end{table}

Since the trajectory becomes straighter the closer to $t=0$, the limited step-count budget should be preferentially allocated to this segment. To verify this, we fix the total step count $K_H=6$ for the same SR starting point and compare three step-size distributions: a uniform schedule that spreads the sampling steps evenly, a front-dense schedule that concentrates most steps in the high-noise starting segment, and an end-dense schedule that concentrates most steps in the low-noise end. The trajectory path lengths $\mathcal{L}$ under the three are $215.82$ for uniform, $251.23$ for front-dense, and $\mathbf{157.29}$ for end-dense, with the end-dense saving about $37\%$ of the total displacement relative to the front-dense. This is consistent with the previous two observations of this section: the effective work of the refine concentrates at the low-noise end, and concentrating the step-count budget there likewise completes the detail refinement with the fewest steps. This end-dense schedule is consistent with the timestep shift of SD3, and can be regarded as a special case of the latter extrapolated to the extreme under the condition of SR plus low-strength noising.

\subsection{Structure-Detail Decoupling}
\label{app:c5}

The previous four sections examined the properties of each stage of MrFlow respectively, and this section returns to the design premise pervading the entire pipeline: the low-resolution stage determines the global structure of the image, and the high-resolution stage only refines the details on top of this structure. If this premise holds, then perturbing the respective randomness of the two stages should leave drastically different imprints on different frequency bands, and we directly test this inference with a set of controlled-variable seed-sensitivity experiments.

We fix the prompt set and introduce randomness into the two stages of MrFlow respectively. $\mathrm{vary\_LR}$ fixes the random seed of the HR refinement and only changes the seed of the low-resolution stage; $\mathrm{vary\_HR}$ fixes the seed of the low-resolution stage, i.e., fixes the entire LR and SR procedure before the SR, and only changes the seed of the HR refinement. Each class is run with $4$ seeds to obtain $4$ outputs, and the pairwise MSE is computed pair by pair over all $\binom{4}{2}=6$ pairs of images within the group on $3$ prompts. To separate the differences in structure and detail, we first decompose each image into low-frequency and high-frequency components via a Gaussian low-pass, then repeat the computation on the two components respectively, supplemented by the full-image MSE and the high-frequency/low-frequency MSE ratio, the former reflecting the comprehensive difference and the latter reflecting in which band the difference mainly falls. The results are summarized in Table~\ref{tab:c5}.

\begin{table}[t]
\caption{Seed-sensitivity experiment of the two stages of MrFlow on Qwen-Image, $3$ prompts $\times\,18$ pairs, frequency-domain pairwise MSE.}
\label{tab:c5}
\begin{center}
\setlength{\tabcolsep}{2.mm}
\begin{tabular}{lcccc}
\toprule
Control variable & low-freq MSE & high-freq MSE & full-image MSE & high/low ratio \\
\midrule
$\mathrm{vary\_LR}$ & $1.087\times10^{-2}$ & $3.054\times10^{-2}$ & $4.842\times10^{-2}$ & 7.4 \\
$\mathrm{vary\_HR}$ & $1.298\times10^{-6}$ & $2.584\times10^{-5}$ & $2.575\times10^{-5}$ & 19.8 \\
\bottomrule
\end{tabular}
\end{center}
\end{table}

The difference between the two control variables is extremely stark. The difference introduced by changing the low-resolution seed at the low-frequency scale is about $8000$ times that of changing the high-resolution seed, and about $1900$ times at the full-image scale. This stark asymmetry indicates that the structural content of MrFlow's final output is almost entirely determined by the low-resolution stage, and the high-resolution stage only introduces a small local perturbation on the fixed LR starting point. Moreover, the distributions of the two perturbations across frequency bands also differ: the high-frequency/low-frequency ratio of $\mathrm{vary\_HR}$ is $19.8$, markedly higher than the $7.4$ of $\mathrm{vary\_LR}$, indicating that once the structure is fixed by the low-resolution stage, the residual change introduced by the high-resolution refine concentrates in high frequency, exactly falling within the boundary that detail refinement should have. This set of results provides direct evidence for the pipeline division of labor described in Section~\ref{sec:method}, and also echoes the conclusion in Section~\ref{app:c4} that the refine velocity field only takes effect in the small-strength interval at the low-noise end.

\section{MrFlow on Recent Open Models}
\label{app:recent_models}

\begin{table}[t]
\caption{Extended results on FLUX.2 Klein variants at $1024\times1024$ resolution. Speedups of non-base variants are measured relative to the corresponding Klein Base $50$-step model with the same model scale. Here ``$\times2$'' denotes the effective positive/negative CFG branches.}
\label{tab:flux2_klein}
\begin{center}
\setlength{\tabcolsep}{0.3mm}
\begin{tabular}{lccccccc}
\toprule
Model variant & Method & Training-free & NFEs & Speedup $\uparrow$ & Geneval $\uparrow$ & DPG $\uparrow$ & OneIG-En $\uparrow$ \\
\midrule
FLUX.2-klein-base-4B & Native & - & $50{\times}2$ & 1x & 0.78 & 83.05 & 0.49 \\
FLUX.2-klein-base-4B & MrFlow & $\surd$ & $(20, 1){\times}2$ & 5.16x & 0.75 & 83.24 & 0.49 \\
FLUX.2-klein-base-4B & MrFlow & $\surd$ & $(12, 1){\times}2$ & 8.03x & 0.74 & 82.63 & 0.48 \\
\cdashline{1-8}
FLUX.2-klein-4B & Native & - & 4 & 20.5x & 0.84 & 85.78 & 0.49 \\
FLUX.2-klein-4B & MrFlow$^\ddagger$ & $\ddagger$ & 4, 1 & 20.1x & 0.83 & 85.17 & 0.50 \\
\midrule
FLUX.2-klein-base-9B & Native & - & $50{\times}2$ & 1x & 0.83 & 85.84 & 0.54 \\
FLUX.2-klein-base-9B & MrFlow & $\surd$ & $(20, 1){\times}2$ & 5.40x & 0.79 & 85.09 & 0.54 \\
FLUX.2-klein-base-9B & MrFlow & $\surd$ & $(12, 1){\times}2$ & 8.79x & 0.77 & 84.62 & 0.54 \\
\cdashline{1-8}
FLUX.2-klein-9B & Native & - & 4 & 22.5x & 0.84 & 86.31 & 0.54 \\
FLUX.2-klein-9B & MrFlow$^\ddagger$ & $\ddagger$ & 4, 1 & 26.9x & 0.87 & 85.92 & 0.54 \\
\bottomrule
\end{tabular}
\end{center}
\end{table}

\begin{table}[t]
\caption{Extended results on Z-Image and Z-Image-Turbo at $1024\times1024$ resolution. Speedups of Z-Image-Turbo variants are measured relative to Z-Image Base at the standard $50$-step setting. NFEs follow the effective DiT-forward count; for Z-Image-Turbo, the official $9$-step setting performs $8$ DiT forwards. Here ``$\times2$'' denotes the effective positive/negative CFG branches.}
\label{tab:zimage_recent}
\begin{center}
\setlength{\tabcolsep}{0.15mm}
\begin{tabular}{lcccccccc}
\toprule
Model variant & Method & Training-free & NFEs & Speedup$\uparrow$ & Geneval$\uparrow$ & DPG$\uparrow$ & OneIG-En$\uparrow$ & OneIG-Zh$\uparrow$ \\
\midrule
Z-Image & Native & - & $50{\times}2$ & 1x & 0.75 & 86.90 & 0.56 & 0.52 \\
Z-Image & MrFlow & $\surd$ & $(20, 1){\times}2$ & 6.66x & 0.68 & 85.67 & 0.55 & 0.51 \\
Z-Image & MrFlow & $\surd$ & $(12, 1){\times}2$ & 10.8x & 0.66 & 84.97 & 0.51 & 0.46 \\
\cdashline{1-9}
Z-Image-Turbo & Native & - & 8 & 10.3x & 0.76 & 85.05 & 0.52 & 0.49 \\
Z-Image-Turbo & MrFlow$^\ddagger$ & $\ddagger$ & 8, 1 & 21.0x & 0.75 & 84.37 & 0.52 & 0.49 \\
\bottomrule
\end{tabular}
\end{center}
\end{table}

To further examine whether the same staged design transfers beyond the two main backbones, we additionally evaluate MrFlow on two recent open model families, FLUX.2 Klein~\citep{flux-2-2025} and Z-Image~\citep{ZImage}. The results are summarized in Tables~\ref{tab:flux2_klein} and~\ref{tab:zimage_recent}. For the non-base FLUX.2 Klein variants and Z-Image-Turbo, which already use a reduced-step generation regime, we report the speedup relative to the corresponding base model at the standard $50$-step setting, so that the number reflects the practical acceleration limit against the full model in the same family.

On FLUX.2 Klein Base, MrFlow keeps the same behavior observed on FLUX.1-dev and Qwen-Image. With $20$ low-resolution steps and $1$ high-resolution step, it reaches $5.16\times$ and $5.40\times$ speedup on the 4B and 9B variants, respectively, while preserving the OneIG-En score within a very small gap. The more aggressive $12,1$ configuration further improves the speedup to $8.03\times$ and $8.79\times$, with a moderate metric drop. The distilled Klein variants also remain compatible with the multi-resolution pipeline: adding one high-resolution refinement step to the $4$-step model reaches $20.10\times$ on 4B and $26.92\times$ on 9B relative to their base counterparts, denoted as MrFlow$^\ddagger$ in the table.

The conclusion is similar on Z-Image. On the base model, the $20,1$ setting achieves $6.66\times$ speedup while keeping OneIG-En at $0.55$ and OneIG-Zh at $0.51$, close to the native model. Even the $12,1$ setting reaches $10.8\times$ speedup while maintaining usable metric performance. When combined with Z-Image-Turbo, MrFlow further pushes the effective speedup to $21.0\times$ under the $8,1$ effective DiT-forward setting. These results indicate that MrFlow is not tied to a particular architecture or model release, and can be directly applied to newer flow-matching backbones and their distilled variants.

\section{Efficiency Analysis}
\label{app:efficiency}

We decompose the end-to-end runtime of MrFlow to clarify where the acceleration comes from. Taking the $12{+}1$ setting on Qwen-Image as an example, we measure all stages in the inference path, including text encoding, low-resolution initial noise generation, low-resolution denoising, VAE decoding, pixel-space super-resolution, VAE encoding, high-resolution noise generation and noising, high-resolution denoising, and the final VAE decoding. The average end-to-end latency of this configuration is $4.77$s, compared with $49.32$s for native $50$-step Qwen-Image, corresponding to a practical speedup of $10.35\times$.

Figure~\ref{fig:efficiency_breakdown} visualizes the same measurement as a left-aligned horizontal runtime breakdown. The dominant costs of MrFlow are low-resolution sampling and high-resolution refinement, which take $3.24$s and $1.03$s, respectively. In contrast, the fixed overhead beyond denoising is small: the low-resolution initial noise generation, intermediate VAE decoding, Real-ESRGAN super-resolution, and VAE encoding together take about $0.30$s. Therefore, the efficiency gain mainly follows from two factors. First, each low-resolution denoising step processes only about $1/4$ of the image tokens. Second, the low-resolution stage can establish the global structure with substantially fewer steps. Since pixel-space super-resolution and the additional VAE transformations introduce only a small fixed cost, MrFlow converts a large portion of expensive high-resolution sampling into cheaper low-resolution sampling while retaining one high-resolution refinement step for local details.

\begin{figure}[t]
\begin{center}
\includegraphics[width=\linewidth]{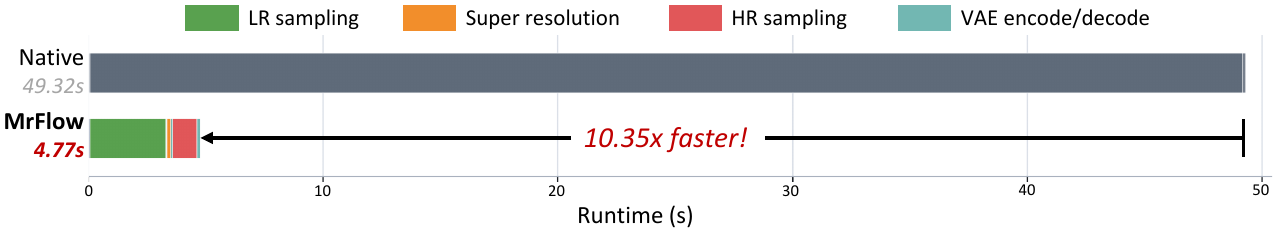}
\end{center}
\caption{Stage-wise runtime breakdown of native Qwen-Image and MrFlow-$12, 1$.}
\label{fig:efficiency_breakdown}
\end{figure}

\begin{table}[t]
\caption{Stage-wise runtime breakdown on Qwen-Image at $1024\times1024$ resolution. Times are reported in seconds. Blank entries indicate stages that are not present in the corresponding pipeline.}
\label{tab:efficiency_breakdown}
\begin{center}
\setlength{\tabcolsep}{0.1mm}
\resizebox{\linewidth}{!}{%
\begin{tabular}{lccccccccccc}
\toprule
Method & NFE & Text enc. & LR noise & LR samp. & Mid VAE dec. & SR & VAE enc. & HR noise & HR samp. & Final VAE dec. & Total \\
\midrule
Native & $50{\times}2$ & $0.067$ & -- & -- & -- & -- & -- & $0.0025$ & $49$ & $0.14$ & $49$ \\
MrFlow & $(12, 1){\times}2$ & $0.067$ & $0.00012$ & $3.2$ & $0.035$ & $0.18$ & $0.083$ & $0.0025$ & $1.0$ & $0.14$ & $4.8$ \\
\bottomrule
\end{tabular}
}
\end{center}
\end{table}

\section{More Generation Examples}
\label{app:examples}

To keep the submitted PDF compact and convenient to download and inspect, the rasterized figures in this paper are compressed with a compression ratio above $90\%$. Some fine-grained visual details may therefore appear less sharp in the PDF, especially in densely arranged comparison figures. This compression artifact is only introduced during document preparation and should not be interpreted as a limitation of the generated images or of the proposed method.

\subsection{Comparison with Various SOTA Strategies}
\label{app:d1}

Beyond the quantitative metrics in Section~\ref{sec:main_results} of the main text, we observe that the gap between methods is even more noticeable when directly inspecting generation examples.

\textbf{Benchmark samples.} In addition to the advantage in the quantitative evaluation of Section~\ref{sec:main_results}, MrFlow in fact also looks better on the actually generated images. Figure~\ref{fig:showcase-compare-flux} directly extracts generation examples from the DPG-Bench task on FLUX.1-dev. MrFlow significantly surpasses other existing training-free strategies in both generation quality and efficiency, and MrFlow$^\dagger$, which fuses timestep distillation without extra training, further attains a higher speedup.

\begin{figure}[t]
\begin{center}
\includegraphics[width=\linewidth]{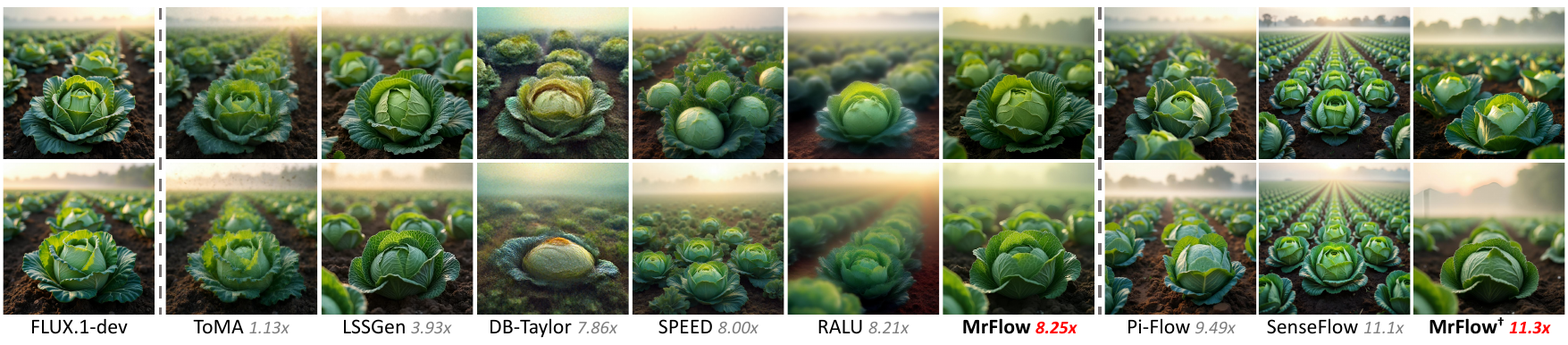}
\end{center}
\caption{Comparison of the effects of various methods on FLUX.1-dev. The dashed lines separate the pretrained model, training-free strategies, and strategies that rely on or exploit timestep distillation.}
\label{fig:showcase-compare-flux}
\end{figure}

\textbf{Multi-resolution strategy comparison.} Figure~\ref{fig:lssgen} further compares MrFlow with other multi-resolution acceleration strategies, including LSSGen, RALU, and SPEED. LSSGen already exhibits dense artifacts even at a moderate speedup, while RALU and SPEED can still obtain competitive quantitative scores on FLUX.1-dev at speedups above $8\times$, with SPEED even partially exceeding MrFlow on some metrics in Table~\ref{tab:flux_tf}. However, the generated samples reveal a clear gap in perceptual fidelity: these baselines exhibit severe local degradation, including blurred fine details, unstable textures, or collapsed structures. In contrast, MrFlow maintains sharp and coherent details at comparable or higher acceleration levels, suggesting that pixel-space super-resolution followed by low-strength high-resolution resampling is more reliable than latent- or frequency-domain expansion.

\begin{figure}[t]
\begin{center}
\includegraphics[width=\linewidth]{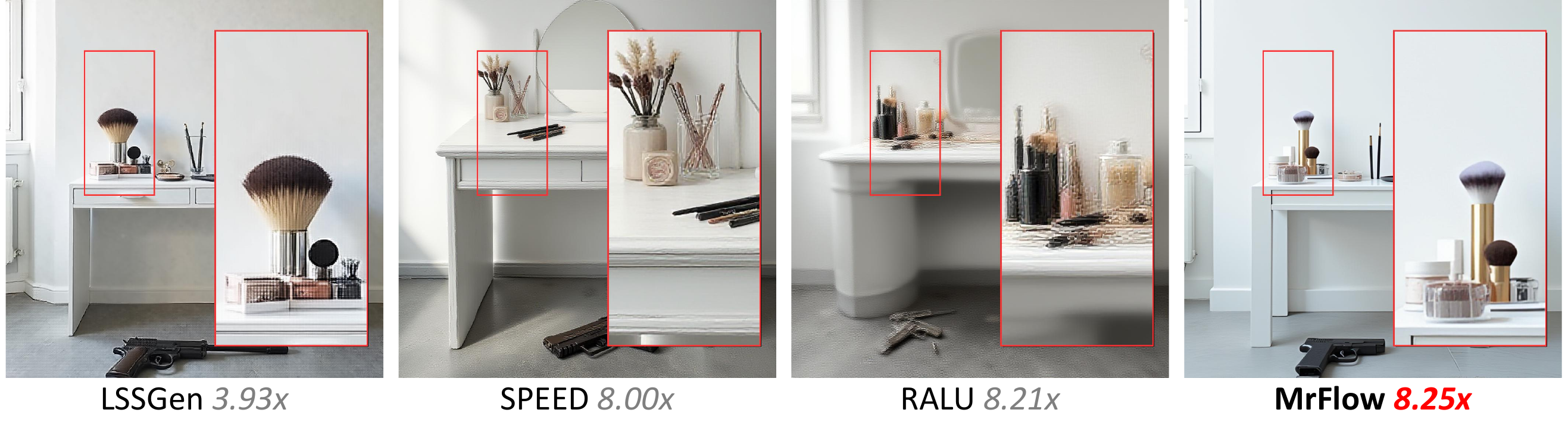}
\end{center}
\caption{Detail comparison of multi-resolution acceleration methods on FLUX.1-dev.}
\label{fig:lssgen}
\end{figure}

\textbf{More scenarios.} We also test some more diverse prompts on Qwen-Image to further examine the accelerated generation capability of each method. Figure~\ref{fig:showcase-compare} shows some examples. Consistent with the observation on FLUX.1-dev, MrFlow can not only significantly outperform various other advanced training-free strategies, but also be directly combined with training-dependent timestep distillation strategies to attain a higher speedup.

\subsection{More Examples of MrFlow}
\label{app:d2}

In Figures~\ref{fig:showcase-mrflow-qwen} and \ref{fig:showcase-mrflow-flux}, we show the generation effects of MrFlow performing acceleration under the configuration of $12$ low-resolution steps and $1$ high-resolution step on FLUX.1-dev and Qwen-Image respectively. At a speedup of $8$-$10\times$, MrFlow demonstrates excellent generation effects at arbitrary aspect ratios and resolutions.

\begin{figure}[t]
\begin{center}
\includegraphics[width=\linewidth]{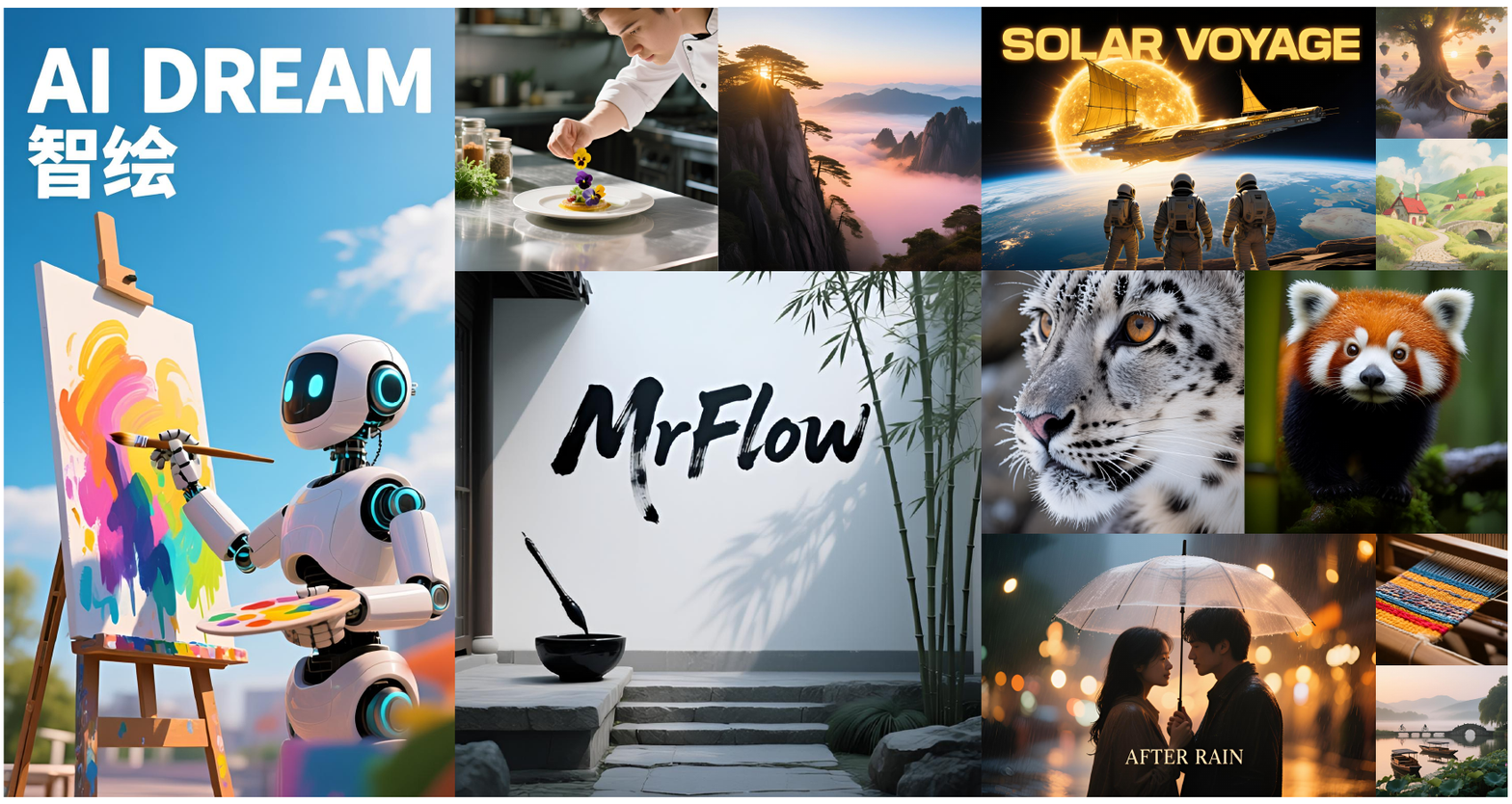}
\end{center}
\caption{Generation examples of MrFlow on Qwen-Image. The configuration is $12$ steps at the low-resolution stage and $1$ step at the high-resolution stage, with speedups all above $10\times$.}
\label{fig:showcase-mrflow-qwen}
\end{figure}

\begin{figure}[t]
\begin{center}
\includegraphics[width=\linewidth]{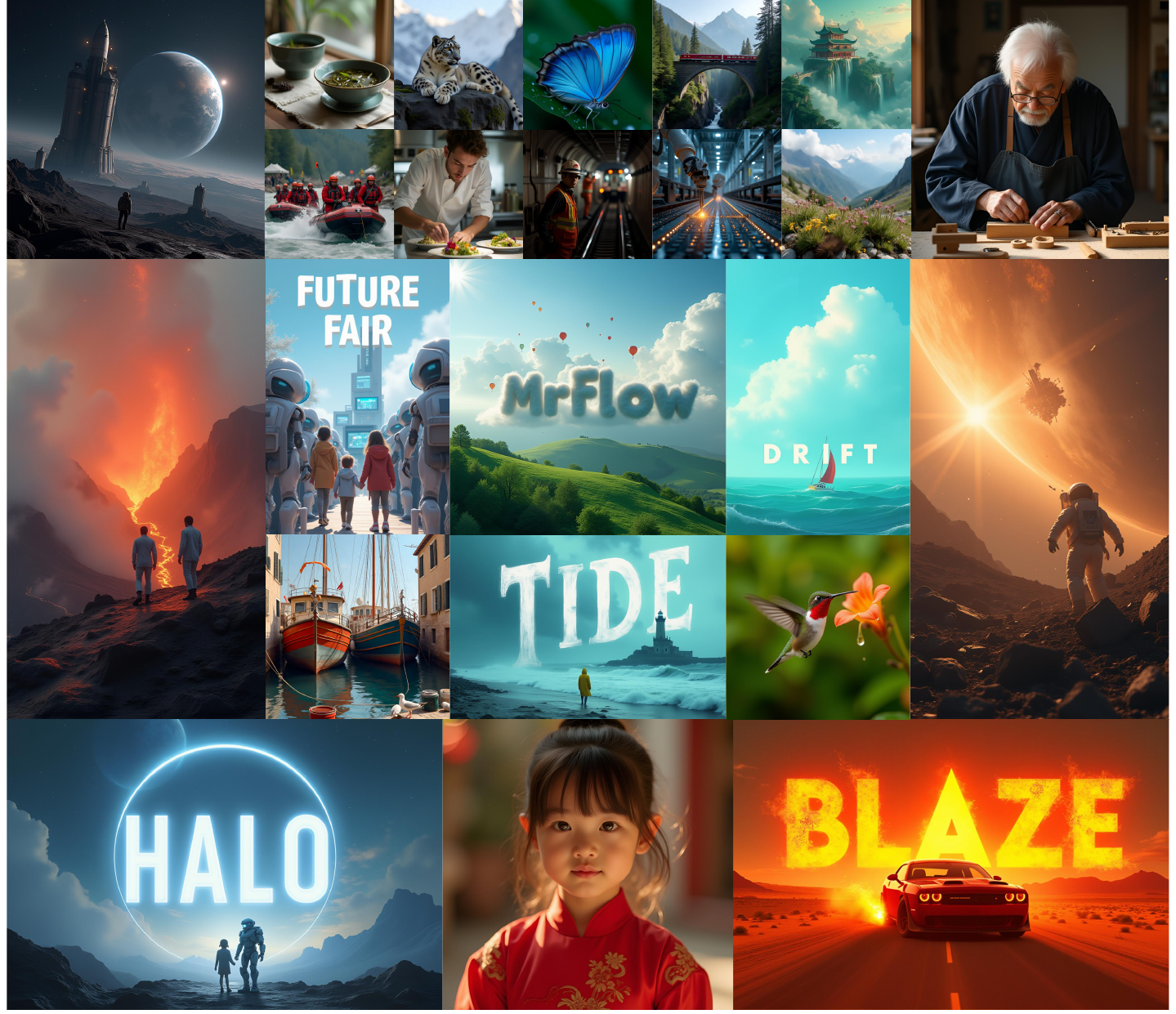}
\end{center}
\caption{Generation examples of MrFlow on FLUX. The configuration is $12$ steps at the low-resolution stage and $1$ step at the high-resolution stage, with speedups all above $8\times$.}
\label{fig:showcase-mrflow-flux}
\end{figure}

\subsection{Prompts of the Images in the Paper}
\label{app:d3}

For reproducibility and easier inspection of qualitative examples, we list below the text prompts used for the images shown in the paper. For figures containing multiple samples, prompts are listed in the same order as the corresponding images.

\paragraph{Prompts for Figure~\ref{fig:showcase-compare}.}
\begin{enumerate}
\item Close-up of a white horse in a spring pasture, sunlight on mane, clear eye reflection, wildflowers, airy realistic photography.
\item Mars rice greenhouse, shallow water beds, red dunes outside, agronomists, robots, warm lamps, detailed sci-fi realism.
\end{enumerate}

\paragraph{Prompt for Figure~\ref{fig:framework}.}
\begin{enumerate}
\item Close-up of a chubby golden British shorthair on a knitted blanket, tiny ears, plush cheeks, bright cozy bedroom light.
\end{enumerate}

\paragraph{Prompt for Figure~\ref{fig:sr_compare}.}
\begin{enumerate}
\item A cozy corner bookstore at dusk with a large wooden hanging sign that reads 'THE WANDERING PAGE' in hand-painted serif letters, and below it a smaller chalkboard sign saying 'Est. 1987 — Open 9am to 9pm'. Warm amber window light, rain-slick cobblestone street, autumn leaves, shallow depth of field, 35mm film photograph.
\end{enumerate}

\paragraph{Prompt for Figure~\ref{fig:mrflow-compare-qwen}.}
\begin{enumerate}
\item Top-down view of a watchmaker's workbench, featuring a disassembled mechanical wristwatch with tiny gears, jewels, screws and springs laid out on a green felt mat. A loupe, tweezers, and a brass screwdriver set sit beside a leather-bound notebook with the handwritten heading 'Caliber 2824-2 — Service Log'. Warm tungsten desk lamp, hard shadows, macro photograph at f/5.6, razor-sharp focus across the scene.
\end{enumerate}

\paragraph{Prompt for Figure~\ref{fig:showcase-compare-flux}.}
\begin{enumerate}
\item An expansive field, blanketed by the soft light of morning, cradles a collection of eight cabbages, their green heads round and plump. These vegetables are nestled among rows of rich soil, dotted with glistening droplets of dew that cling to their crinkled leaves. As wisps of mist begin to lift, the cabbages lie poised, ready for the day's impending harvest.
\end{enumerate}

\paragraph{Prompt for Figure~\ref{fig:lssgen}.}
\begin{enumerate}
\item An immaculate white vanity desk featuring an array of beauty products, among which a soft-bristled cosmetics brush and a sleek black eyeliner pencil are neatly arranged. To the side, a stark and dissonant element, a metallic handgun, lies on the textured surface of a grey concrete floor, creating a jarring juxtaposition against the delicate makeup tools. The vanity itself is positioned against a white wall, illuminated by the natural light streaming in from a nearby window.
\end{enumerate}

\paragraph{Prompts for Figure~\ref{fig:showcase-mrflow-qwen}.}
The items marked with $\dagger$ were originally written in Chinese; they are shown here as English translations or English descriptions to keep the source compatible with pdfLaTeX.
\begin{enumerate}
\item ($\dagger$) Vertical AI art festival poster, a friendly AI robot is painting a glowing mural by itself in daylight, robot arm holding a brush, vivid colors, clean promotional cover composition, no human painter; only readable text: ``AI DREAM'' and a Chinese phrase meaning ``AI painting''; strictly no other letters, no extra words, no logos, no signatures, no captions, no watermark.
\item A focused young chef with rolled-up sleeves carefully placing edible flowers on a finely plated dish in a sunlit open kitchen, gleaming stainless-steel surfaces, herbs and spice jars softly out of focus in the background, 35mm reportage style.
\item ($\dagger$) The main peak of Huangshan at sunrise, bathed in golden morning glow, with ancient pine trees leaning out from steep granite cliffs, a rolling sea of clouds tinted pale orange and rose, distant mountain layers fading into cool blue, ultra-wide panoramic view, sharp details under natural light.
\item ($\dagger$) A courtyard calligraphy wall with the only readable text ``MrFlow'', surrounded by brushes, an ink bowl, stone steps, and bamboo shadows, with no other readable text.
\item Bright sci-fi film poster, three astronauts facing a golden solar sail ship, Earth glow, heroic composition, only readable text: ``SOLAR VOYAGE''.
\item A colossal ancient tree with sprawling roots breaking through layers of clouds at sunrise, tiny floating islands suspended around its branches, sunlight pouring through the foliage onto a winding skybridge, painterly fantasy concept art with a warm saturated palette.
\item A cozy hillside village in Studio Ghibli illustration style, hand-painted clouds drifting over rolling green hills, small red-roofed cottages with smoking chimneys, a winding cobblestone path leading to a stone bridge, warm afternoon light, soft watercolor textures.
\item Extreme close-up of a snow leopard face, frost on whiskers, amber eye, wet nose texture, snowy rocks blurred behind, realistic wildlife photo.
\item Close-up of a red panda on a mossy branch, damp fur, tiny claws, round eyes, bamboo forest rain blur, detailed wildlife realism.
\item Romantic drama poster, two figures under a clear umbrella after rain, warm city lights, soft cinematic bokeh, only readable text: ``AFTER RAIN''.
\item Woven resolution tapestry, coarse threads becoming fine image detail, wooden loom, colored yarn, tactile macro scene.
\item Hangzhou lake at sunrise, wooden boats, lotus leaves, stone bridge, misty hills, cyclists on shore, soft realistic light.
\end{enumerate}

\paragraph{Prompts for Figure~\ref{fig:showcase-mrflow-flux}.}
\begin{enumerate}
\item Orbital lighthouse array, huge mirrors, maintenance craft, Earth glow, astronauts, hard sci-fi panorama.
\item Ceramic table arrangement, celadon bowls, tea, wet leaves, linen, bronze spoon, window light, macro material detail.
\item A majestic snow leopard resting on a moss-covered rock at the edge of a Himalayan cliff, soft alpine sunlight on its thick spotted fur, distant snow-capped peaks in the background, sharp golden eyes, wildlife photograph at 200mm with anamorphic depth.
\item An extreme macro photograph of a morpho butterfly's iridescent blue wing with visible scale tile structure, tiny dewdrops clinging to a few scales acting as natural lenses, soft greenery as a cool blurred background, 1:2 magnification.
\item Red alpine train crossing a stone viaduct above pine forest, waterfalls, misty peaks, tiny hikers, crisp morning light.
\item Jade palace floating above a cloud sea, cranes, waterfalls, tiled roofs, glowing bridges, bright Chinese fantasy painting.
\item River rescue training exercise, inflatable boats, ropes, helmets, fast water, command tents, realistic documentary action.
\item A focused young chef with rolled-up sleeves carefully placing edible flowers on a finely plated dish in a sunlit open kitchen, gleaming stainless-steel surfaces, herbs and spice jars softly out of focus in the background, 35mm reportage style.
\item Subway signal repair, workers in tunnel, orange lights, cables, tools, train blur, realistic infrastructure action.
\item AI chip foundry floor, wafer carriers, robotic arms, cleanroom suits, amber reflections, precise industrial photograph.
\item Alpine seed collection, botanists, tiny flowers, rock slopes, clouds, specimen boxes, crisp realistic macro landscape.
\item An elderly Japanese craftsman with white hair and round wire-rim glasses focused on carving a small wooden box in his sunlit workshop, soft sawdust floating in the warm light, neatly arranged hand tools on the workbench, traditional indigo apron with subtle wear, 50mm photograph with shallow depth of field.
\item Text-free science thriller poster, research lab on the rim of an active volcano, orange lava glow, white steam, scientists in suits, no text, no logos.
\item Promotional event cover, bright outdoor technology fair, robots, image screens, families, clean daylight, only readable text: ``FUTURE FAIR''.
\item Only the word 'MrFlow' sculpted from clouds above green hills, balloons, sunlight, no other readable text.
\item Adventure poster, huge centered title ``DRIFT'' in windblown white letters, small sailboat under a giant cloud, turquoise sea, sunny cinematic poster, no other text.
\item Mediterranean boat repair dock, painted hulls, ropes, gulls, buckets, sunlit water reflections, busy realistic scene.
\item Coastal drama poster, huge top title ``TIDE'' in white painted letters, old lighthouse, crashing waves, yellow raincoat figure, cinematic clouds, no other text.
\item A ruby-throated hummingbird hovering with rapid wing motion blur in front of a bright trumpet vine flower, droplets of nectar caught mid-air, lush greenery softly out of focus behind, 1/4000s shutter, vivid macro photograph.
\item Text-free space blockbuster poster, astronaut reaching toward a damaged satellite above Earth, golden sunlight, debris field, heroic composition, no text, no logos.
\item Hopeful sci-fi cover, huge centered title ``HALO'' in soft luminous letters, child and service robot under a ring-shaped sky station, clean daylight, no other text.
\item Close-up portrait of a little Chinese girl in red hanfu, soft courtyard sunlight, delicate embroidery, warm cinematic bokeh.
\item Action poster, huge centered title ``BLAZE'' in fiery orange letters, red sports car and sparks on a desert road, dramatic sunset, no other text.
\end{enumerate}

\end{document}